\documentclass[11pt]{article}
\usepackage[margin=1in]{geometry}
\usepackage{authblk}

\usepackage[utf8]{inputenc}
\usepackage[T1]{fontenc}
\usepackage{hyperref}
\usepackage{url}
\usepackage{booktabs}
\usepackage{amsfonts}
\usepackage{nicefrac}
\usepackage{microtype}
\usepackage{xcolor}

\usepackage{graphicx}
\usepackage{algorithm}
\usepackage{algorithmic}

\usepackage{amsmath}

\usepackage{amssymb}
\usepackage{mathtools}
\usepackage{amsthm}

\definecolor{algcomment}{RGB}{100,100,160}

\usepackage{enumitem}
\usepackage{multirow}
\usepackage{wrapfig}

\usepackage{natbib}

\usepackage[capitalize,noabbrev]{cleveref}

\theoremstyle{plain}
\newtheorem{theorem}{Theorem}[section]
\newtheorem{proposition}[theorem]{Proposition}

\newtheorem{corollary}[theorem]{Corollary}
\theoremstyle{definition}

\theoremstyle{remark}
\newtheorem{remark}[theorem]{Remark}

\title{Discrete Flow Matching for Offline-to-Online \\Reinforcement Learning}

\author[]{Fairoz Nower Khan}
\author[]{Nabuat Zaman Nahim}
\author[]{Peizhong~Ju}
\affil[]{Department of Computer Science, University of Kentucky}
\date{}

\begin{document}

\maketitle

\begin{abstract}
 
Many reinforcement learning (RL) tasks have discrete action spaces, but most generative policy methods based on diffusion and flow matching are designed for continuous control. Meanwhile, generative policies usually rely heavily on offline datasets and offline-to-online RL is itself challenging, as the policy must improve from new interaction without losing useful behavior learned from static data.
To address those challenges, we introduce DRIFT, an online fine-tuning method that updates an offline pretrained continuous-time Markov chain (CTMC) policy with an advantage-weighted discrete flow matching loss. To preserve useful pretrained knowledge, we add a path-space penalty that regularizes the full CTMC trajectory distribution, rather than only the final action distribution.
For large discrete action spaces, we introduce a
candidate-set approximation that updates the actor over a
small subset of actions sampled from reference-policy rollouts and uniform
exploration. Our theoretical analysis shows that the candidate-set error is controlled by missing target probability mass, and the induced CTMC generator error decreases as the candidate set covers more high-probability actions.
Experiments on prevailing discrete action RL task
show that our method provides stable offline-to-online improvement
across all tasks, achieving the highest average score on Jericho
with a simple GRU encoder while outperforming methods that use
pretrained language models. Controlled experiments further confirm
that the path-space penalty remains bounded during fine-tuning and
that the CTMC generator adapts to shifted rewards faster than
deterministic baselines. The candidate-set mechanism is supported
by a stability analysis showing that the generator error decreases
exponentially with candidate coverage.

 
\end{abstract}

\section{Introduction}
\label{sec:intro}
 
Offline-to-online RL asks a simple but important question: how can a policy learned from static data keep improving through real interaction? 
Offline RL trains agents from logged data without costly or risky exploration~\citep{levine2020offline,kumar2020conservative,kostrikov2022offline}, but the learned policy is limited by the coverage and quality of that dataset. 
If good actions or important states are missing from the data, offline training alone cannot discover them. 
Offline-to-online fine-tuning addresses this limitation by using the offline policy as a strong initialization and then improving it with online experience~\citep{nair2020awac,lee2022offline,nakamoto2023cal,zhang2023policy}. 
The challenge is to improve the policy without forgetting the useful behavior structure learned during offline pre-training.
 
Diffusion~\citep{ho2020denoising,song2021score} and flow-matching~\citep{lipman2023flow,liu2022rectified} policies capture multimodal action distributions in offline data~\citep{wang2023diffusion,chen2023offline,kang2023efficient,zhang2025energy,park2025flow}, 
and can guide exploration during offline-to-online fine-tuning. 
However, these methods rely on continuous dynamics that do not naturally apply to discrete spaces, leaving out domains such as recommendation, combinatorial control, scheduling, and token-level decision making.
Discrete flow matching~\citep{gat2024discrete, campbell2024generative} replaces continuous flows with CTMC generators
making generative policies possible in discrete action spaces \citep{khan2026flow}, but online fine-tuning such policies remains open. 
During the online stage, the generator must move toward 
newly discovered high-reward actions, preserve useful pretrained behavior, 
and scale when the action space is large.
Existing offline discrete flow methods do not address online adaptation, while offline-to-online methods are mostly designed for continuous spaces.
 
\paragraph{Our method} We present an offline-to-online fine-tuning algorithm for discrete action spaces, improving online rewards while staying close to the pretrained generator. 
First, we design a discrete flow matching actor update toward a reward-weighted target policy~\citep{peters2010reps,ziegler2019fine}. 
Second, we use a path-space trust region that regularises
the full CTMC trajectory rather than only the terminal action distribution~\citep{schulman2015trust,zhang2025reinflow}. 
Third, for large action spaces, we update only over a candidate set built from reference-policy rollouts and uniform exploration, and show that the approximation error decreases as the candidate set covers more high-probability actions. We provide a stability analysis (Theorem~\ref{thm:generator-stability}) showing that the generator error under the candidate-set restriction decays exponentially with budget, and validate the resulting phase-transition empirically (Figure~\ref{fig:motivation}a).

\begin{figure}[t]
  \centering
  \includegraphics[width=0.80\linewidth]{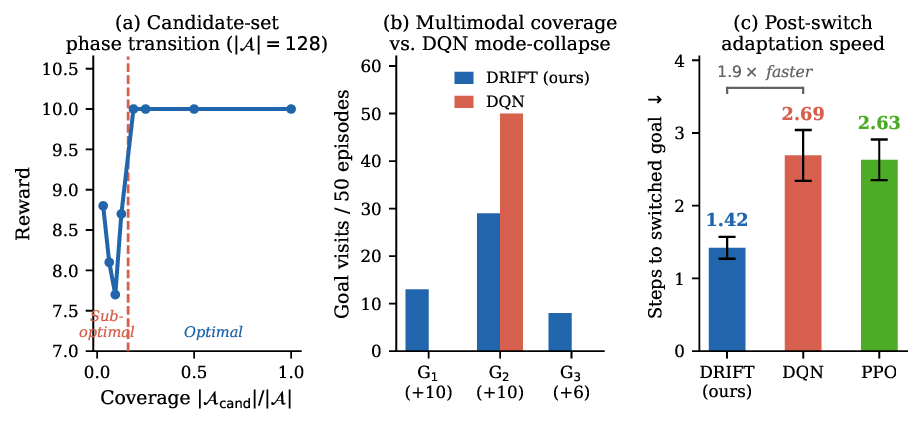}
  \caption{
  Three empirical motivations for our method.
  \textbf{(a)} The candidate-set approximation shows sharp
  performance phase transition at 16\% action coverage.
  \textbf{(b)} Our CTMC policy maintains multimodal coverage, while DQN mode-collapses to single goal.
  \textbf{(c)} In a goal-switch environment, our stochastic policy
  adapts to new reward signal 1.9$\times$ faster than DQN and PPO.
  }
  \label{fig:motivation}
\end{figure}


Our motivating experiments (\cref{fig:motivation}) show our generative policy maintains multimodal goal coverage where
DQN~\citep{mnih2015human} mode-collapses, our stochastic
generator adapts $1.9\times$ faster than DQN and
PPO~\citep{schulman2017proximal} after a reward shift, and the candidate-set approximation reaches optimal
performance at just 16\% action coverage in a 128-action gridworld, matching our coverage bound in Proposition~\ref{prop:coverage}. Together, these observations identify a common regime where standard discrete-action RL struggles: structured action spaces with multimodal actions, shifting rewards, or large action spaces where exhaustive evaluation is infeasible. This work is a first step in that direction which we start by evaluating how our method performs on standard benchmarks, comparing with baselines on Jericho text games, MinAtar, discretised MuJoCo tasks, and combinatorial gridworlds.

 
 
 
 

\section{Related Work}

\paragraph{Offline-to-online RL and discrete-action RL.}
Offline-to-online methods stabilize the transition from static data to 
live interaction through advantage weighting~\citep{nair2020awac}, 
balanced replay with pessimistic 
ensembles~\citep{lee2022offline}, calibrated value 
learning~\citep{nakamoto2023cal}, policy 
expansion~\citep{zhang2023policy}, and adaptive 
constraints~\citep{li2026state}. These approaches target continuous 
control or use standard policy classes. In discrete-action RL, 
DQN~\citep{mnih2015human}, Double 
DQN~\citep{van2016deep}, Rainbow~\citep{hessel2018rainbow}, 
A2C~\citep{mnih2016asynchronous}, and 
PPO~\citep{schulman2017proximal} provide strong online baselines but 
do not model multimodal action distributions.

\paragraph{Generative policies for RL.}
Diffusion and flow-based policies capture multimodal behavior in 
offline data. Diffusion-QL~\citep{wang2023diffusion}, 
IDQL~\citep{hansen2023idql}, efficient diffusion 
policies~\citep{kang2023efficient}, energy-weighted flow 
matching~\citep{zhang2025energy}, and Flow 
Q-Learning~\citep{park2025flow} all improve offline RL through 
expressive action modeling with value guidance. Online fine-tuning of 
such policies has been explored in continuous 
control~\citep{zhang2025reinflow,mcallister2025flow}, but these 
methods require continuous action representations and do not extend 
to discrete decisions.

\paragraph{Discrete generative models and trust-region fine-tuning.}
Discrete diffusion and flow models replace Euclidean dynamics with 
Markov processes over finite state spaces for 
sequences~\citep{gat2024discrete}, molecular 
design~\citep{campbell2024generative}, and biological 
generation~\citep{holderrieth2025generator,wang2025fine}, but none 
address offline-to-online RL with value learning. Our path-space 
regularization builds on trust-region policy 
improvement~\citep{peters2010reps,schulman2015trust,schulman2017proximal} 
and KL-regularized fine-tuning from human 
feedback~\citep{ziegler2019fine,ouyang2022training}. We differ in that we regularize the full CTMC path measure rather 
than only the terminal action distribution.

\section{Preliminaries}
\label{sec:prelims}

\subsection{Markov Decision Process and Offline Data}
\label{sec:mdp}

We consider a Markov decision process (MDP) $\mathcal{M} = (\mathcal{S}, \mathcal{A}, r, P, \gamma)$, where $\mathcal{S}$ is the state space, $\mathcal{A}$ is a \emph{finite} discrete action space with $K = |\mathcal{A}|$ actions, $r\colon \mathcal{S}\times\mathcal{A}\to\mathbb{R}$ is the reward function, $P$ is the transition kernel, and $\gamma\in(0,1)$ is the discount factor.
The state-action value function is $Q^\pi(s,a) = \mathbb{E}_\pi\!\bigl[\sum_{t=0}^\infty \gamma^t r(s_t,a_t) \mid s_0\!=\!s, a_0\!=\!a\bigr]$.

An offline dataset $\mathcal{D}_{\mathrm{off}} = \{(s_i,a_i,r_i,s_i')\}_{i=1}^N$ is collected by an unknown behavior policy.
The KL-regularized optimal policy admits the well-known Boltzmann form~\citep{todorov2006linearly,haarnoja2017reinforcement,lu2023contrastive}:
$\pi^\star(a \mid s)
\;\propto\;
\pi_{\mathrm{ref}}(a \mid s)\,\exp\!\bigl(A(s,a)/\beta\bigr),$
where $A(s,a) = Q(s,a) - V(s)$ is the advantage, $\pi_{\mathrm{ref}}$ is a reference policy, and $\beta>0$ is a temperature that trades off reward maximization against proximity to $\pi_{\mathrm{ref}}$.

\subsection{Discrete Flow Matching via CTMCs}
\label{sec:dfm_prelim}

Discrete Flow Matching (DFM)~\citep{campbell2024generative,gat2024discrete} extends the flow-matching paradigm from continuous Euclidean spaces to finite discrete spaces by replacing continuous vector fields used in standard flow matching with Continuous-Time Markov Chains (CTMCs).

A CTMC over the action space $\mathcal{A}$ is specified by a time-dependent \emph{rate matrix} (or generator) $u_t(i\!\to\!j \mid s)$ satisfying the constraints for every source action $i$:
\begin{equation}
\label{eq:rate_constraints}
u_t(i\!\to\!j \mid s) \ge 0 \;\;\forall\, j\neq i,
\qquad
\sum\nolimits_{j\in\mathcal{A}} u_t(i\!\to\!j \mid s) = 0.
\end{equation}
The first constraint ensures nonnegative off-diagonal transition rates and the second ensures probability conservation.
The marginal probability mass function $p_t$ evolves according to the Kolmogorov forward equation: $\frac{d}{dt}\,p_t(j \mid s)
= \sum\nolimits_{i\in\mathcal{A}} p_t(i \mid s)\,u_t(i\!\to\!j \mid s).$

DFM learns a parametric rate model $u_\theta$ by regressing it against a tractable \emph{conditional} target generator derived from a prescribed probability path connecting a source distribution $p_0$ to a target distribution $p_1$.
At inference time, actions are sampled by simulating the learned CTMC from $t=0$ to $t=1$ using Euler discretization.

\begin{wrapfigure}[33]{r}{0.58\linewidth}
\vspace{-19pt}
\centering
\small
\refstepcounter{algorithm}
\noindent\textbf{Algorithm~\thealgorithm} DRIFT: Discrete flow matching with Regularised Information-theoretic Fine-Tuning (sketched version, full algorithm in Appendix~\ref{app:full_algo})
\label{alg:sketch}
\vspace{1pt}
\hrule
\vspace{2pt}

\begin{algorithmic}[1]
\REQUIRE Pre-trained generator $u_{\mathrm{ref}}$, offline data $\mathcal{D}_{\mathrm{off}}$, temperature $\beta$, KL weight $\alpha$, CTMC sub-steps $M$, candidate sizes $N_{\mathrm{roll}}, N_{\mathrm{rand}}$, refresh interval $K$, flow-time truncation $\delta \in (0,1)$ 
\STATE Initialize $u_\theta \leftarrow u_{\mathrm{ref}}$, critics $Q_{\phi_1},Q_{\phi_2}$, value $V_\psi$, targets $Q_{\phi_k^-},V_{\psi^-}$, buffer $\mathcal{B}$
\FOR{$n = 1$ \TO $N_{\mathrm{steps}}$}
    \STATE \label{line:collect}Simulate CTMC under $u_\theta$ from $X_0\!\sim\!\mathrm{Unif}(\mathcal{A})$ for $M$ sub-steps; execute $a\!=\!X_M$; store $(s,a,r,s',d)$ in $\mathcal{B}$
    \STATE \label{line:mixed}Sample mixed minibatch: $(1\!-\!\rho)B$ from $\mathcal{B}$ and $\rho B$ from $\mathcal{D}_{\mathrm{off}}$
    \STATE \label{line:critic}Update critics: $Q_{\phi_k} \leftarrow \arg\min\, \mathcal{L}_{Q_k}$ 
    \STATE \label{line:value}Update value: $V_\psi \leftarrow \arg\min\, \mathcal{L}_V$ 
    using lagged $\tilde{\pi}^-$ (Eq.~\ref{eq:lagged_target})
    \FOR{each state $s_b$ in actor batch}
        \STATE \label{line:cand}Build $\mathcal{A}_{\mathrm{cand}}(s_b)$ via $N_{\mathrm{roll}}$ rollouts of frozen $u_{\mathrm{ref}}$ $+$ $N_{\mathrm{rand}}$ uniform actions
        \STATE \label{line:pi_ref}Compute smoothed $\pi_{\mathrm{ref}}$ (Eq.~\ref{eq:pi_ref}) and clipped advantage $\bar{A}$ 
        \STATE \label{line:target_pi}Set target policy $\tilde{\pi}(a \mid s_b) \propto \pi_{\mathrm{ref}}(a \mid s_b)\exp(\bar{A}(s_b,a)/\beta)$ (Eq.~\ref{eq:target_policy})
        \STATE \label{line:target_gen}Construct bridge $p_t$ (Eq.~\ref{eq:linear-bridge}) and target generator $u_t^*$ by independent coupling (Eq.~\ref{eq:coupling})
        \STATE \label{line:dfm_loss}
        Compute $\ell_{\mathrm{DFM}}(s_b)$ by sampling
        $t \sim \mathrm{Unif}(0,1-\delta)$ and $i \sim p_t$
        (Eq.~\ref{eq:dfm_loss}).
        \STATE \label{line:kl}Estimate path-space $\widehat{\mathrm{KL}}(s_b)$ against frozen $u_{\mathrm{ref}}$ (Eq.~\ref{eq:path_kl})
    \ENDFOR
    \STATE \label{line:actor_update}Update $u_\theta$ by minimizing $\mathcal{L}_{\mathrm{actor}} = \mathcal{L}_{\mathrm{DFM}} + \alpha\,\widehat{\mathcal{L}}_{\mathrm{KL}}$ (Eq.~\ref{eq:combined_loss})
    \STATE Soft-update targets: $\phi_k^- \!\leftarrow\! (1\!-\!\tau)\phi_k^-\!+\!\tau\phi_k$, $\psi^- \!\leftarrow\! (1\!-\!\tau)\psi^-\!+\!\tau\psi$
    \STATE \label{line:refresh}If $n \bmod K = 0$: set $u_{\mathrm{ref}} \leftarrow u_\theta$
\ENDFOR
\RETURN Fine-tuned generator $u_\theta$
\end{algorithmic}

\vspace{2pt}
\hrule
\end{wrapfigure}

\subsection{Offline Pre-Training via Advantage-Weighted DFM}
\label{sec:pretraining_prelim}

Our fine-tuning method only requires an initial reference generator
$u_{\rm ref}$ and does not depend on a specific pre-training procedure.
In our experiments, we obtain $u_{\rm ref}$ using an advantage-weighted
DFM pre-training stage that combines standard DFM with value-based
guidance. Specifically, we construct a target policy
$\tilde{\pi}_{\rm pre}(a\mid s)
\propto
\exp(\bar A_{\rm off}(s,a)/\beta),$
where $\bar A_{\rm off}$ is computed from offline advantage estimates,
and train a CTMC generator to flow from a uniform prior to
$\tilde{\pi}_{\rm pre}$ using the DFM objective. This produces a
reference generator that captures high-value behaviors from the offline
data before online interaction begins.

Importantly, this pre-training stage is not part of the definition of
our fine-tuning algorithm. Any pretrained discrete-action policy can be
used as an initialization, as it can be represented as, or distilled
into, a CTMC generator. For example, given a pretrained policy
$\pi_{\rm pre}(a\mid s)$ from behavior cloning, offline RL, PPO, DQN, or
another generative policy, we can fit $u_\theta$ by training a DFM bridge
from a simple prior $p_0$ to $\pi_{\rm pre}(\cdot\mid s)$. The resulting
generator is then used as $u_{\rm ref}$ and fine-tuned by the same online
procedure in~\cref{sec:method}. Thus, our contribution is the CTMC-based
offline-to-online fine-tuning mechanism, while the initialization can be
obtained from any suitable pretrained policy.
Full details of the pre-training procedure are given in Appendix~\ref{app:pretraining}.


\section{Our Method: Online fine-tuning of Discrete Flow Matching Policy}
\label{sec:method}

Offline-pretrained policies are limited by the coverage and quality of their data~\citep{levine2020offline}, while naive online fine-tuning can forget the useful structure learned offline.
We introduce DRIFT (Discrete flow matching with Regularised Information-theoretic Fine-Tuning), which improves a pretrained CTMC generator with online rewards while preserving its behavior through a path-space trust region.

\paragraph{Overview.}
Starting from a reference generator $u_{\mathrm{ref}}$ obtained via advantage-weighted DFM pre-training (Section~\ref{sec:pretraining_prelim}), DRIFT performs iterative online learning.
Each iteration consists of four steps, summarized in Algorithm~\ref{alg:sketch}:
(1)~collect online transitions by rolling out the current CTMC generator in the environment (Line~\ref{line:collect}),
(2)~update critics and a value network from a mixed replay buffer (Lines~\ref{line:critic}--\ref{line:value}),
(3)~construct a reward-weighted target policy and its corresponding DFM target generator over a candidate action set (Lines~\ref{line:cand}--\ref{line:target_gen}),
(4)~update the generator by minimizing a combined DFM loss plus path-space KL penalty (Lines~\ref{line:dfm_loss}--\ref{line:actor_update}).
The reference generator is periodically refreshed to track improving performance (Line~\ref{line:refresh}).


\subsection{Environment Interaction and Replay}
\label{sec:interaction}

Actions are generated by simulating the current CTMC generator $u_\theta$ forward in flow time.
Starting from $X_0 \sim \mathrm{Uniform}(\mathcal{A})$, we run $M$ Euler sub-steps with step size $\Delta t = 1/M$ (Algorithm~\ref{alg:ctmc_sim} in Appendix~\ref{app:ctmc_sim}).
At each sub-step, the process either stays at the current action or jumps to a new one with probability proportional to the learned transition rates.
The terminal action $a = X_M$ is executed in the environment, and the resulting transition $(s,a,r,s',d)$ is stored in a replay buffer $\mathcal{B}$. To prevent the fine-tuning distribution from drifting too far from the pre-trained policy, each training minibatch mixes online transitions from $\mathcal{B}$ with a fraction $\rho$ of offline transitions from $\mathcal{D}_{\mathrm{off}}$ (Line~\ref{line:mixed}), following offline-to-online practice~\citep{nakamoto2023cal,lee2022offline}.

\subsection{Candidate Set Construction and Target Policy}
\label{sec:target_policy}

Evaluating the target policy over the entire action space is intractable when $K = |\mathcal{A}|$ is large.
We restrict computation to a candidate subset $\mathcal{A}_{\mathrm{cand}}(s)$ for each state (Line~\ref{line:cand}), by combining:
\begin{enumerate}
\item \textbf{Reference rollouts.} We run the \emph{frozen} reference generator $u_{\mathrm{ref}}$ for $N_{\mathrm{roll}}$ independent CTMC simulations and collect the terminal actions, recording $\mathrm{count}(a)$ for each action.
\item \textbf{Uniform exploration.} We draw $N_{\mathrm{rand}}$ actions uniformly from $\mathcal{A}$, ensuring coverage of potentially high-reward actions outside the pre-trained support.
\end{enumerate}

\paragraph{Smoothed reference policy.}
The reference distribution over the candidate set is estimated from rollout frequencies with additive smoothing (Line~\ref{line:pi_ref}):
\begin{equation}
\label{eq:pi_ref}
\pi_{\mathrm{ref}}(a \mid s)
\;\propto\;
\frac{\mathrm{count}(a)}{N_{\mathrm{roll}}} + \frac{\epsilon}{|\mathcal{A}_{\mathrm{cand}}(s)|},
\qquad a \in \mathcal{A}_{\mathrm{cand}}(s).
\end{equation}

\paragraph{Target policy.}
The target policy $\tilde{\pi}$ is the solution to the KL-regularized policy improvement problem~\citep{ziegler2019fine,peters2010relative} (Line~\ref{line:target_pi}):
\begin{equation}
\label{eq:target_policy}
\tilde{\pi}(a \mid s)
= \frac{
\pi_{\mathrm{ref}}(a \mid s)\,\exp\!\bigl(\bar{A}(s,a)/\beta\bigr)
}{
\sum_{a'\in\mathcal{A}_{\mathrm{cand}}(s)} \pi_{\mathrm{ref}}(a' \mid s)\,\exp\!\bigl(\bar{A}(s,a')/\beta\bigr)
}.
\end{equation}
We extend this target to the full action space by setting $\tilde\pi(a\mid s)=0,  a\notin \mathcal A_{\mathrm{cand}}(s).$
This policy concentrates probability on high-advantage actions while remaining anchored to $\pi_{\mathrm{ref}}$.

\subsection{Critic and Value Network Updates}
\label{sec:critic}

We maintain two independent critic networks $Q_{\phi_1}, Q_{\phi_2}$ and a value network $V_\psi$, together with frozen target copies $Q_{\phi_k^-}$ and $V_{\psi^-}$.

\paragraph{Critic update.}
Each critic is trained by minimizing the squared Bellman error against a one-step TD target computed from the frozen value network (Line~\ref{line:critic}): $\mathcal{L}_{Q_k}(\phi_k)
= \frac{1}{B}\sum_{b=1}^{B}
\bigl(Q_{\phi_k}(s_b,a_b) - \mathrm{sg}[r_b + \gamma\,V_{\psi^-}(s_b')(1-d_b)]\bigr)^2$, where $\mathrm{sg}[\cdot]$ denotes stop-gradient. Using the frozen value network $V_{\psi^-}$ in the TD target avoids the cost of a full CTMC rollout at every critic step and reduces variance~\citep{haarnoja2018soft}.

\paragraph{Value update.}
The value network is updated toward the expected critic value under a fully frozen \emph{lagged target policy} $\tilde{\pi}^-$ (Line~\ref{line:value}).
The lagged target policy is constructed from frozen networks only where the advantages are computed as $A^-(s,a) = \min_k Q_{\phi_k^-}(s,a) - V_{\psi^-}(s)$, then normalized and clipped to $\bar{A}^-(s,a) = \mathrm{clip}(\hat{A}^-(s,a), -c, c)$, giving
\begin{equation}
\label{eq:lagged_target}
\tilde{\pi}^-(a \mid s)
\;\propto\;
\pi_{\mathrm{ref}}(a \mid s)\,\exp\!\bigl(\bar{A}^-(s,a)/\beta\bigr),
\qquad a \in \mathcal{A}_{\mathrm{cand}}(s).
\end{equation}
We compute both $\tilde{\pi}^-$ and the value target on $\mathcal{A}_{\mathrm{cand}}(s)$ so that the value backup is consistent with the same restricted action distribution used by the actor update, while avoiding an expensive summation over the full action space.
The value loss is then: $\mathcal{L}_V(\psi)
= \frac{1}{B}\sum_{b=1}^{B}
\Bigl(V_\psi(s_b) - \mathrm{sg}\!\Bigl[\textstyle\sum_{a}\tilde{\pi}^-(a \mid s_b)\,\min_k Q_{\phi_k^-}(s_b,a)\Bigr]\Bigr)^2.$
All target networks are soft-updated: $\phi_k^- \leftarrow (1-\tau)\phi_k^- + \tau\phi_k$, $\psi^- \leftarrow (1-\tau)\psi^- + \tau\psi$.

The advantage used in the actor update (Section~\ref{sec:actor}) takes the pessimistic minimum of the two critics to suppress overestimation~\citep{fujimoto2018addressing}: $A(s,a) = \min_k Q_{\phi_k}(s,a) - V_{\psi^-}(s)$.
Advantages are normalized per-state to give $\beta$ a consistent scale across states, then clipped to $[-c,c]$: $\bar{A}(s,a)
= \mathrm{clip}\!\Bigl(\frac{A(s,a)-\mu_s}{\sigma_s+\varepsilon},\;-c,\;c\Bigr)$, where $\mu_s,\sigma_s$ are the mean and standard deviation of $A(s,\cdot)$ over $\mathcal{A}_{\mathrm{cand}}(s)$.

\subsection{Actor Update: DFM Loss and Path-Space KL}
\label{sec:actor}

The actor update has two components: a discrete flow matching loss that steers the generator toward $\tilde{\pi}$, and a path-space KL penalty that prevents catastrophic forgetting.

\paragraph{Linear bridge and target generator.}

We define a linear interpolation from the full-action uniform prior 
$p_0 = \mathrm{Uniform}(\mathcal{A})$ to the target policy:
\begin{equation}
p_t(a \mid s) = (1 - t)\, p_0(a) + t\, \tilde{\pi}(a \mid s),
\qquad
\dot{p}_t(a \mid s) = \tilde{\pi}(a \mid s) - p_0(a).
\label{eq:linear-bridge}
\end{equation}
For the candidate-set update, we view $\tilde\pi(\cdot\mid s)$ as a distribution on the full action space by setting $\tilde\pi(a\mid s)=0$ for $a\notin\mathcal A_{\mathrm{cand}}(s)$. Because $\tilde{\pi}$ spreads mass across multiple actions, the Kolmogorov forward equation does not uniquely determine a generator consistent with $\dot{p}_t$.
We resolve this via the \emph{independent coupling transport}~\citep{campbell2024generative,holderrieth2025generator}, which constructs a valid target generator $u_t^*$ by routing outflow from each losing action to all gaining actions proportionally to their demand (Line~\ref{line:target_gen}).
Concretely, for $i \neq j$:
\begin{equation}
\label{eq:coupling}
u_t^*(i\!\to\!j \mid s)
= \frac{(\dot{p}_t(i))^-\,(\dot{p}_t(j))^+}{p_t(i \mid s)\,Z_t},
\qquad
Z_t = \sum\nolimits_{a} (\dot{p}_t(a))^+.
\end{equation}
Details of this construction and mass conservation proof is given in Appendix~\ref{app:coupling}.
\paragraph{Source actions outside the candidate set.}
Because the source distribution is defined on the full action space,
\(p_0 = \mathrm{Uniform}(\mathcal{A})\), while the target
\(\tilde\pi(\cdot\mid s)\) is supported only on \(\mathcal{A}_{\mathrm{cand}}(s)\), 
a sampled intermediate action \(i \sim p_t(\cdot \mid s)\) may lie outside
\(\mathcal{A}_{\mathrm{cand}}(s)\), especially for small \(t\). 
The independent-coupling construction handles this naturally. 
For \(i \notin \mathcal{A}_{\mathrm{cand}}(s)\),
\[
\tilde\pi(i \mid s) = 0,
\qquad
\dot p_t(i \mid s) = \tilde\pi(i \mid s) - p_0(i) = -\frac{1}{|\mathcal{A}|}.
\]
Hence $(\dot p_t(i \mid s))_- = \frac{1}{|\mathcal{A}|}$, so \(i\) acts as a source of probability mass. This outgoing mass is routed
to actions with positive demand, i.e.\ actions \(j\) for which
\((\dot p_t(j \mid s))_+ > 0\), according to $u_t^*(i \to j \mid s)
= \frac{(\dot p_t(i \mid s))_- (\dot p_t(j \mid s))_+}{p_t(i \mid s)\, Z_t}.$
Note that \((\dot p_t(j \mid s))_+ > 0\) requires \(\tilde\pi(j \mid s) > 1/|\mathcal{A}|\), 
which holds only for \(j \in \mathcal{A}_{\mathrm{cand}}(s)\) where the target 
policy concentrates mass. Thus, actions outside the candidate set are not 
ignored, they lose mass and transfer it to candidate actions that need to 
gain probability under the target policy.

\paragraph{Discrete flow matching loss.}
For each state $s_b$, we sample a flow time 
$t \sim \mathrm{Uniform}(0, 1 - \delta)$ and an intermediate action $i \sim p_t(\cdot \mid s_b)$, then regress the learned rates against the target rates (Line~\ref{line:dfm_loss}):
\begin{equation}
\label{eq:dfm_loss}
\mathcal L_{\mathrm{DFM}}(\theta)
=
\frac{1}{B}
\sum_{b=1}^{B}
\sum_{j\neq i_b}
\left(
u_\theta(i_b\!\to\! j,t_b\mid s_b)
-
u^*_{t_b}(i_b\!\to\! j\mid s_b)
\right)^2 .
\end{equation}

\paragraph{Path-space KL regularization.} Unlike the policy KL commonly used in RL, which compares two action distributions at a state, our KL compares two CTMC path distributions, so it constrains how probability mass moves over flow time.
We penalize deviations across the \emph{entire trajectory} of the CTMC, including both the jump destinations and the holding times (Line~\ref{line:kl}).
For CTMCs, the KL divergence between path measures $\mathcal{P}_{u_\theta}$ and $\mathcal{P}_{u_{\mathrm{ref}}}$ admits a tractable decomposition via the Radon--Nikodym derivative~\citep{kipnis2013scaling,zhang2025reinflow}:

\begin{equation}
\label{eq:path_kl}
\begin{split}
\mathrm{KL}\!\bigl(\mathcal{P}_{u_\theta}\!\parallel\!\mathcal{P}_{u_{\mathrm{ref}}}\bigr)
= \mathbb{E}_{\mathcal{P}_{u_\theta}}\!\!\Biggl[
&\sum_{k:\mathrm{jumps}}
\log\frac{u_\theta(X_{t_k^-}\!\!\to\!X_{t_k},t_k \mid s)}{u_{\mathrm{ref}}(X_{t_k^-}\!\!\to\!X_{t_k},t_k \mid s)} \\
&+ \int_0^1 \!\bigl(\lambda_{\mathrm{ref}}(X_t,t \mid s) - \lambda_\theta(X_t,t \mid s)\bigr)dt
\Biggr],
\end{split}
\end{equation}
where $\lambda_\theta(i,t \mid s) = \sum_{j\neq i} u_\theta(i\!\to\!j,t \mid s)$ is the total exit rate.
This is a Monte Carlo plug-in surrogate where gradients flow only through the 
evaluated $u_\theta$ and $\lambda_\theta$ along the simulated path, so the 
term acts as a practical regularizer toward $u_{\rm ref}$ rather than as an 
unbiased gradient estimator of the path-space KL. For each sampled state $s_b$, we estimate this quantity with a Monte Carlo plug-in estimator
$\widehat{\mathcal L}_{\mathrm{KL}}(s_b, \theta)$ along a simulated CTMC path. 

\paragraph{Combined fine-tuning objective.}
The generator is then updated by minimizing
\begin{equation}
\label{eq:combined_loss}   
\mathcal L_{\mathrm{actor}}(\theta)
=
\mathcal L_{\mathrm{DFM}}(\theta)
+
\alpha \widehat{\mathcal L}_{\mathrm{KL}}(\theta),
\end{equation}
where $\widehat{\mathcal L}_{\mathrm{KL}}(\theta)
=
\frac{1}{B}\sum_{b=1}^B
\widehat{\mathcal L}_{\mathrm{KL}}(s_b,\theta)$ and $\alpha>0$ controls the strength of the path-space trust region.

\subsection{Rate Network Parameterization}
\label{sec:param}

The CTMC generator $u_\theta$ constraints (Eq.~\ref{eq:rate_constraints}) are enforced by construction.
The network outputs unconstrained logits $g_\theta(i\!\to\!j, t \mid s)$ for $j \neq i$, which are passed through a softplus to obtain nonnegative off-diagonal rates:
$u_\theta(i\!\to\!j,t \mid s) = \mathrm{softplus}(g_\theta(i\!\to\!j,t \mid s))$.
The diagonal is set to $u_\theta(i\!\to\!i,t \mid s) = -\sum_{j\neq i} u_\theta(i\!\to\!j,t \mid s)$,
guaranteeing a valid generator at every parameter setting.
The rate model is a lightweight MLP with two hidden layers of size $256$.

\subsection{Theoretical Analysis}
\label{sec:theory}

We analyze the error caused by replacing the full action space with a 
candidate set. For the analysis, let $\tilde\pi(\cdot \mid s)$ denote 
the ideal target policy on the full action space, and define its 
candidate-restricted renormalization as
\[
\tilde\pi_{\mathrm{cand}}(a \mid s) 
= \frac{\tilde\pi(a \mid s)\, \mathbf{1}\{a \in \mathcal{A}_{\mathrm{cand}}(s)\}}
       {\sum_{a' \in \mathcal{A}_{\mathrm{cand}}(s)} \tilde\pi(a' \mid s)}.
\]

\begin{proposition}[Coverage Error]
\label{prop:coverage}
Let $\epsilon(s) = \sum_{a \notin \mathcal{A}_{\mathrm{cand}}(s)} \tilde\pi(a \mid s)$
be the target mass excluded by the candidate set. If $\epsilon(s) < 1$, then
\[
\|\tilde\pi_{\mathrm{cand}} - \tilde\pi\|_1 = 2\epsilon(s).
\]
If the candidate set is built from $N_{\mathrm{roll}}$ independent samples 
from $\pi_{\mathrm{ref}}$ and $N_{\mathrm{rand}}$ independent uniform samples, 
then
\[
\mathbb{E}[\|\tilde\pi_{\mathrm{cand}} - \tilde\pi\|_1] 
= 2 \sum_{a \in \mathcal{A}} \tilde\pi(a \mid s)\,
  (1 - \pi_{\mathrm{ref}}(a \mid s))^{N_{\mathrm{roll}}}\,
  (1 - 1/K)^{N_{\mathrm{rand}}}.
\]
Thus, the expected excluded mass decreases exponentially in both 
$N_{\mathrm{roll}}$ and $N_{\mathrm{rand}}$.
\end{proposition}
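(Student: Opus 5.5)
The plan has two parts: a deterministic identity for the $\ell_1$ error given a fixed candidate set, and then an expectation over the random construction of that set.

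\textbf{Deterministic identity.} Fix $s$ and write $C=\mathcal{A}_{\mathrm{cand}}(s)$. Let $\epsilon=\epsilon(s)$, so the candidate mass is $\sum_{a\in C}\tilde\pi(a\mid s)=1-\epsilon>0$ because $\epsilon<1$; hence $\tilde\pi_{\mathrm{cand}}$ is well defined. Split the $\ell_1$ sum over $C$ and its complement.
\begin{itemize}
\item For $a\in C$, we have $\tilde\pi_{\mathrm{cand}}(a)-\tilde\pi(a)=\tilde\pi(a)\bigl(\tfrac{1}{1-\epsilon}-1\bigr)=\tilde\pi(a)\tfrac{\epsilon}{1-\epsilon}\ge 0$. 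Summing over $C$ gives $(1-\epsilon)\cdot\tfrac{\epsilon}{1-\epsilon}=\epsilon$.
\item For $a\notin C$, we have $\tilde\pi_{\mathrm{cand}}(a)=0$, so these terms contribute exactly $\sum_{a\notin C}\tilde\pi(a)=\epsilon$.
\end{itemize}
Adding the two parts gives $2\epsilon(s)$. Equivalently, this is twice the total-variation distance between a distribution and its conditioning on an event of probability $1-\epsilon$.

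\textbf{Expectation over the random candidate set.} Here $\tilde\pi$ is treated as the fixed ideal full-space target, so only $C$ is random. Write $\epsilon(s)=\sum_{a}\tilde\pi(a\mid s)\,\mathbf{1}\{a\notin C\}$ and apply linearity of expectation:
\[
\mathbb{E}[\epsilon(s)]=\sum_a \tilde\pi(a\mid s)\,\Pr[a\notin C].
\]
The event $a\notin C$ means $a$ was missed by all $N_{\mathrm{roll}}$ reference samples and all $N_{\mathrm{rand}}$ uniform samples. These draws are mutually independent, each reference draw misses $a$ with probability $1-\pi_{\mathrm{ref}}(a\mid s)$, and each uniform draw misses it with probability $1-1/K$. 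Hence
\[
\Pr[a\notin C]=(1-\pi_{\mathrm{ref}}(a\mid s))^{N_{\mathrm{roll}}}(1-1/K)^{N_{\mathrm{rand}}}.
\]
Multiplying by $2$ and invoking the identity from the first part gives the claimed formula. Exponential decay follows since each factor is at most $1$, and $(1-1/K)^{N_{\mathrm{rand}}}\le e^{-N_{\mathrm{rand}}/K}$ uniformly in $a$.

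\textbf{Main obstacle: the event $\epsilon(s)=1$.} The identity in the first part needs $\epsilon<1$. When $C$ captures no target mass, $\tilde\pi_{\mathrm{cand}}$ is undefined, so the expectation formula is exact only if this event is handled. Two options:
\begin{itemize}
\item Adopt the convention that $\|\tilde\pi_{\mathrm{cand}}-\tilde\pi\|_1:=2\epsilon(s)=2$ on this event, for instance by setting $\tilde\pi_{\mathrm{cand}}$ to any distribution supported off $\mathrm{supp}\,\tilde\pi$, or to $0$ with the error counted as $2$.
\item Note that the event has probability zero whenever $N_{\mathrm{rand}}\ge 1$ and $\tilde\pi$ has full support. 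However, $\tilde\pi$ need not have full support, so this alone is insufficient in general.
\end{itemize}
I would state the convention explicitly, so that the expectation is interpreted as $2\,\mathbb{E}[\epsilon(s)]$, and then the computation in the second part goes through verbatim.
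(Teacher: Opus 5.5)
Your proposal is correct and follows the paper's proof essentially step for step: you split the $\ell_1$ sum over $\mathcal{A}_{\mathrm{cand}}(s)$ and its complement to obtain $2\epsilon(s)$, then use linearity of expectation with $\Pr[a\notin\mathcal{A}_{\mathrm{cand}}(s)]=(1-\pi_{\mathrm{ref}}(a\mid s))^{N_{\mathrm{roll}}}(1-1/K)^{N_{\mathrm{rand}}}$, which follows from independence of the draws. Your explicit handling of the event $\epsilon(s)=1$, where $\tilde\pi_{\mathrm{cand}}$ is undefined, is a worthwhile addition; the paper's proof passes over that case without comment.
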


\begin{proposition}[Mass Conservation]
\label{prop:mass}
Assume $p_t(i) > 0$ for actions with outgoing mass and 
$Z_t = \sum_a (\dot p_t(a))_+ > 0$. Then the independent-coupling generator
\[
u^*_t(i \to j) = \frac{(\dot p_t(i))_- (\dot p_t(j))_+}{p_t(i)\, Z_t}, 
\qquad i \neq j,
\]
satisfies the Kolmogorov forward equation.
\end{proposition}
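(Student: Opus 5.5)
We verify the Kolmogorov forward equation $\dot p_t(j)=\sum_i p_t(i)\,u^*_t(i\to j)$ directly, coordinate by coordinate. The plan is to compute the inflow and the outflow at a fixed action $j$ and show that their difference is exactly $\dot p_t(j)$. First we fix the diagonal convention required by Eq.~\eqref{eq:rate_constraints}, namely $u^*_t(j\to j)=-\sum_{k\neq j}u^*_t(j\to k)$. Off-diagonal rates are products of nonnegative quantities divided by $p_t(i)Z_t>0$, so $u^*_t$ is a valid generator. This holds whenever $(\dot p_t(i))_->0$, which is exactly where the assumption $p_t(i)>0$ is imposed. When $(\dot p_t(i))_-=0$ the rate is simply $0$, so we are free to set it to zero even if $p_t(i)=0$.

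The key algebraic fact is mass conservation of the path: $\sum_a \dot p_t(a)=\frac{d}{dt}\sum_a p_t(a)=0$. For the linear bridge of Eq.~\eqref{eq:linear-bridge} this is also immediate, since $\sum_a(\tilde\pi(a)-p_0(a))=0$. Consequently
\[
\sum_a(\dot p_t(a))_+=\sum_a(\dot p_t(a))_-=Z_t .
\]

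Next we compute the two flux terms at $j$. The factors $p_t(i)$ cancel, which gives
\[
\text{inflow}=\sum_{i\neq j}p_t(i)\,u^*_t(i\to j)=\frac{(\dot p_t(j))_+}{Z_t}\sum_{i\neq j}(\dot p_t(i))_-,
\qquad
\text{outflow}=p_t(j)\sum_{k\neq j}u^*_t(j\to k)=\frac{(\dot p_t(j))_-}{Z_t}\sum_{k\neq j}(\dot p_t(k))_+ .
\]
We then add back the $i=j$ and $k=j$ terms. Both equal $(\dot p_t(j))_+(\dot p_t(j))_-/Z_t$, and this quantity is $0$ because a number cannot have both a positive and a negative part. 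Using the identity above, the two full sums both become $Z_t$. Therefore
\[
\text{inflow}-\text{outflow}=(\dot p_t(j))_+-(\dot p_t(j))_-=\dot p_t(j),
\]
which is the forward equation.

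The main subtlety is bookkeeping, not any deep step. The first point is the convention for actions with $p_t(i)=0$ or zero outflow, where the rate is defined as $0$. The second is the observation $(x)_+(x)_-=0$, which lets the excluded diagonal terms be reinserted at no cost. Everything else is direct substitution.
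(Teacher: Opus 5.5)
Your proof is correct and follows the paper's argument. Both substitute into the forward equation, cancel the $p_t(i)$ factors, and use $\sum_a(\dot p_t(a))_+=\sum_a(\dot p_t(a))_-=Z_t$. The only cosmetic difference is in the last step: the paper writes the off-diagonal sums as $Z_t-(\dot p_t(j))_\mp$ and lets the two cross terms cancel, whereas you reinsert the diagonal terms using $(x)_+(x)_-=0$.
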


\begin{theorem}[Generator Stability]
\label{thm:generator-stability}
Let $u^*_t$ and $u^{*,\mathrm{cand}}_t$ be the target generators induced
by $\tilde\pi$ and $\tilde\pi_{\mathrm{cand}}$, respectively. Assume both
bridges use the same source distribution $p_0=\mathrm{Unif}(\mathcal A)$.
Fix $\delta\in(0,1)$ and suppose that for all $t\in[0,1-\delta]$,
$p_t(i),\;p^{\mathrm{cand}}_t(i)\ge p>0, Z_t,\;Z^{\mathrm{cand}}_t\ge Z>0.$
Then, for every source action $i$,
\[
\sum_{j\neq i}
\left|
u^{*,\mathrm{cand}}_t(i\to j)
-
u^*_t(i\to j)
\right|
\le
\frac{C}{p^2 Z^2}
\left\|
\tilde\pi_{\mathrm{cand}}(\cdot\mid s)
-
\tilde\pi(\cdot\mid s)
\right\|_1 ,
\]
for a universal constant $C$. Therefore, as the candidate set covers
more target-policy mass, the induced CTMC generator approaches the
full-action generator.
\end{theorem}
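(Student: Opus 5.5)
We treat the generator as a smooth function of three quantities, $a_i = (\dot p_t(i))_-$, $b_j = (\dot p_t(j))_+$ and the denominator $p_t(i)Z_t$, and bound how each moves when $\tilde\pi$ is replaced by $\tilde\pi_{\mathrm{cand}}$. Because both bridges share $p_0=\mathrm{Unif}(\mathcal A)$, the perturbations are explicit. Set $\Delta := \tilde\pi_{\mathrm{cand}}-\tilde\pi$ and $\eta := \|\Delta\|_1$. Then
\[
\dot p^{\mathrm{cand}}_t - \dot p_t = \Delta,\qquad p^{\mathrm{cand}}_t - p_t = t\,\Delta .
\]
Since $x\mapsto x_\pm$ is $1$-Lipschitz, we get $|a^{\mathrm{cand}}_i - a_i| \le |\Delta(i)|$ and $\sum_j |b^{\mathrm{cand}}_j - b_j| \le \eta$. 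Writing $Z_t = \tfrac12\|\dot p_t\|_1$ (using $\sum_a \dot p_t(a)=0$) gives $|Z^{\mathrm{cand}}_t - Z_t| \le \tfrac12\eta$. Finally $|p^{\mathrm{cand}}_t(i)-p_t(i)| \le |\Delta(i)| \le \eta$.

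We also need a priori sizes. Since $\tilde\pi$ and $p_0$ are probability vectors, $a_i \le 1$, $\sum_j b_j = Z_t \le 1$, and likewise for the candidate quantities. Moreover $p_t(i),Z_t \ge p,Z$ by hypothesis.

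Now fix $i$ and decompose with a telescoping split:
\[
u^{\mathrm{cand}}(i\to j) - u(i\to j) = \frac{a^{\mathrm{cand}}_i b^{\mathrm{cand}}_j - a_i b_j}{p^{\mathrm{cand}}_t(i)Z^{\mathrm{cand}}_t} + a_i b_j\Bigl(\frac{1}{p^{\mathrm{cand}}_t(i)Z^{\mathrm{cand}}_t} - \frac{1}{p_t(i)Z_t}\Bigr).
\]

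For the numerator term, write $a^{\mathrm{cand}}_i b^{\mathrm{cand}}_j - a_ib_j = (a^{\mathrm{cand}}_i-a_i)b^{\mathrm{cand}}_j + a_i(b^{\mathrm{cand}}_j-b_j)$. Summing over $j\ne i$ gives at most $|\Delta(i)|\cdot 1 + 1\cdot\eta \le 2\eta$. Dividing by a denominator of size at least $pZ$, this contributes $2\eta/(pZ)$.

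For the denominator term, use $|1/x - 1/y| = |x-y|/(xy)$ with $x,y \ge pZ$. Bound the numerator by
\[
|p^{\mathrm{cand}}_t(i)Z^{\mathrm{cand}}_t - p_t(i)Z_t| \le |p^{\mathrm{cand}}_t(i)-p_t(i)|\,Z^{\mathrm{cand}}_t + p_t(i)\,|Z^{\mathrm{cand}}_t - Z_t| \le \tfrac32\eta,
\]
using $p_t(i)\le 1$ and $Z^{\mathrm{cand}}_t\le 1$. Multiplying by $\sum_j a_i b_j \le 1$, this term is at most $\tfrac32\eta/(p^2Z^2)$.

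Since $pZ \le 1$, we have $1/(pZ) \le 1/(p^2Z^2)$, so the two terms combine to the claimed bound with $C = 7/2$. The final sentence of the theorem then follows by composing with Proposition~\ref{prop:coverage}, which gives $\eta = 2\epsilon(s)$ and hence exponential decay in expectation.

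The main point requiring care is not the algebra but the bookkeeping of the sum over $j$. The per-$j$ Lipschitz bounds must be summed against $b_j$, not bounded crudely by $K$ times a maximum, so that the constant stays universal and independent of $K$. This relies on $\sum_j b_j = Z_t \le 1$ and on the $\ell_1$ control of $\sum_j|b^{\mathrm{cand}}_j-b_j|$. One should also check that the lower bounds $p$ and $Z$ are applied to both the candidate and full bridges, which the hypothesis provides uniformly on $[0,1-\delta]$.
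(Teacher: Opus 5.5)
Your proof is correct and takes essentially the paper's route. Once you expand the numerator, your two-term split is exactly the paper's three-term decomposition (change in source supply, change in sink demand, change in the denominator), and you bound it with the same tools: the $1$-Lipschitz property of $x\mapsto x_\pm$, the shared-source identity $p^{\mathrm{cand}}_t-p_t=t\,(\tilde\pi_{\mathrm{cand}}-\tilde\pi)$, and the absorption $1/(pZ)\le 1/(p^2Z^2)$. The only differences are in constants. The paper cancels $Z$ in the supply term via $\sum_{j\neq i}(\dot p^{\mathrm{cand}}_t(j))_+\le Z^{\mathrm{cand}}_t$ but bounds $|Z^{\mathrm{cand}}_t-Z_t|$ only by $\|\tilde\pi_{\mathrm{cand}}-\tilde\pi\|_1$, which gives $C=4$. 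Your sharper estimate $|Z^{\mathrm{cand}}_t-Z_t|\le\tfrac12\|\tilde\pi_{\mathrm{cand}}-\tilde\pi\|_1$, which follows from $Z_t=\tfrac12\|\dot p_t\|_1$, gives $C=7/2$.
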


All proofs are provided in Appendix~\ref{app:proofs}.


\section{Experiments}
\label{sec:experiments}

We evaluate DRIFT on three benchmarks: Jericho text games~\citep{hausknecht2020interactive}, MinAtar~\citep{young2019minatar}, and discretised D4RL MuJoCo tasks~\citep{fu2020d4rl}.
All methods share the same offline data, pre-training budget, and online fine-tuning steps within each benchmark.
We report mean scores over 5 seeds unless otherwise noted. Full experimental details are provided in Appendix~\ref{app:benchmarks}.

\subsection{Baselines}
\label{sec:baselines}

We compare against nine offline-to-online methods: CQL~\citep{kumar2020conservative}, Cal-QL~\citep{nakamoto2023cal}, IQL~\citep{kostrikov2022offline}, AWAC~\citep{nair2020awac}, DQN~\citep{mnih2015human}, PPO~\citep{schulman2017proximal}, Rainbow~\citep{hessel2018rainbow}, PEX~\citep{zhang2023policy} and SPA~\citep{li2026state}.
For methods designed for continuous control, we implement faithful discrete-action adaptations (Appendix~\ref{app:baselines}).
On Jericho, we compare against DRRN~\citep{he2016deep} and report published scores for CALM~\citep{yao2020keep} and KG-A2C~\citep{ammanabrolu2020graph}.

\subsection{Jericho Text Games}
\label{sec:jericho_results}

We evaluate DRIFT on ten text-based games from the Jericho benchmark~\citep{hausknecht2020interactive} to test whether the offline-to-online fine-tuning framework transfers to a qualitatively different domain. In these games the agent receives textual observations and must select a text command from a variable-size set of valid actions at each step. The policy architecture uses GRU text encoders shared across all methods (see Appendix~\ref{app:jericho_details} for details). 



\begin{table}[t]
\centering
\small
\caption{Jericho text-game results (raw score, single seed, 300K steps).
Best score per game in \textbf{bold} among our methods.
${}^\dagger$Published scores from~\citet{yao2020keep}.}
\label{tab:jericho_main}
\begin{tabular}{lccccc|cc}
\toprule
& \multicolumn{5}{c|}{\textit{Our experiments}} & \multicolumn{2}{c}{\textit{Prior work${}^\dagger$}} \\
\cmidrule(lr){2-6} \cmidrule(lr){7-8}
Game (max) & \textbf{DRIFT} & DRRN & DQN & IQL & AWAC & CALM & KG-A2C \\
\midrule
Deephome (300)
  & \textbf{35.8} & 6.0 & 6.0 & 6.0 & 3.5
  & 1.0 & 1.0 \\
Detective (360)
  & \textbf{291.5} & 50.0 & 70.0 & 50.0 & 66.5
  & 289.7 & 207.9 \\
Enchanter (400)
  & \textbf{20.0} & \textbf{20.0} & \textbf{20.0} & \textbf{20.0} & 17.0
  & 19.1 & 12.1 \\
Ludicorp (150)
  & \textbf{13.8} & 6.0 & 6.0 & 6.0 & 11.4
  & 10.1 & 17.8 \\
Omniquest (50)
  & 8.0 & \textbf{10.0} & \textbf{10.0} & 5.0 & 5.5
  & 6.9 & 3.0 \\
Pentari (70)
  & \textbf{26.8} & 25.0 & 25.0 & 25.0 & 11.75
  & 0.0 & 50.7 \\
Temple (35)
  & \textbf{8.0} & 5.0 & \textbf{8.0} & \textbf{8.0} & 7.4
  & 0.0 & 7.6 \\
Zork1 (350)
  & 25.0 & \textbf{35.0} & 25.0 & 25.0 & 1.25
  & 30.4 & 34.0 \\
Zork3 (7)
  & 2.5 & \textbf{3.0} & 0.0 & 0.0 & 0.0
  & 0.5 & 0.0 \\
Ztuu (100)
  & \textbf{5.0} & \textbf{5.0} & \textbf{5.0} & \textbf{5.0} & 2.0
  & 3.7 & 9.2 \\
\midrule
Avg.\ norm.\ (\%)
  & \textbf{23.2} & 15.3 & 12.1 & 10.6 & 8.3
  & 12.6 & 19.2 \\
\bottomrule
\end{tabular}
\end{table}

DRIFT achieves the highest average normalized score (23.2\%) across all ten games, outperforming every baseline including the results for CALM (12.6\%) and KG-A2C (19.2\%). This is notable because CALM uses a fine-tuned GPT-2 language model for action generation while DRIFT uses a simple GRU encoder. DRIFT achieves its strongest gains on games with deeper puzzle chains. On Deephome it scores 35.8 compared to 6.0 for almost every other baseline, a 6$\times$ improvement. On Detective it solves the majority of the game (291.5 out of 360) and on Ludicorp it more than doubles DQN, DRRN, and IQL while exceeding the published CALM score. However, DRRN outperforms DRIFT on Zork1 (35.0 vs 25.0) and Omniquest (10.0 vs 8.0), and four games show floor effects where most methods converge to the same score within 300K steps. Path-space regularization helps prevent forgetting in sparse-reward settings by anchoring the fine-tuned generator to the pretrained reference, allowing DRIFT to preserve successful strategies while exploring and notably, it is the only method with a positive score on every game.

\subsection{MinAtar}
\label{sec:minatar_results}

We evaluate DRIFT against nine offline-to-online baselines on MinAtar~\citep{young2019minatar} using the same offline dataset and 300K online steps. As shown in Table~\ref{tab:minatar_main}, DRIFT improves on all five games, achieves the best score on Breakout, Asterix, and Seaquest, and remains competitive on Freeway and Space Invaders. Unlike AWAC and SPA, which collapse to offline scores on four games, DRIFT avoids catastrophic forgetting through path-space KL regularization to the pretrained reference. Although its generative overhead is less beneficial in this small-action regime, DRIFT’s consistent gains show reliable fine-tuning. We also report macro-action MinAtar results ($|\mathcal{A}|{=}216$) in Appendix~\ref{app:macro_details}.

\begin{table}[t]
\centering
\caption{MinAtar offline$\to$online results (mean over 5 seeds). Best online score per game in \textbf{bold}. ${}^\dagger$Methods that fail to improve over their offline score.}
\label{tab:minatar_main}
\resizebox{\textwidth}{!}{%
\begin{tabular}{lcccccccccc}
\toprule
Game & \textbf{DRIFT} & Cal-QL & CQL & IQL & AWAC & DQN & Rainbow & PPO & PEX & SPA \\
\midrule
Breakout    & 0.43$\to$\textbf{17.10} & 1.19$\to$14.61 & 0.62$\to$14.69 & 0.01$\to$17.01 & 0.50$\to$0.51$^\dagger$ & 0.38$\to$14.25 & 0.29$\to$12.48 & ---$\to$0.52 & 0.32$\to$16.73 & 0.19$\to$0.24$^\dagger$ \\
Asterix     & 0.50$\to$\textbf{1.17} & 0.67$\to$1.04 & 0.58$\to$1.15 & 0.66$\to$1.13 & 0.38$\to$0.53$^\dagger$ & 0.55$\to$1.06 & 0.55$\to$0.91 & ---$\to$0.50 & 0.53$\to$1.05 & 0.44$\to$0.53$^\dagger$ \\
Freeway     & 1.26$\to$25.00 & 19.89$\to$25.71 & 22.97$\to$\textbf{26.50} & 2.54$\to$23.56 & 15.85$\to$15.69$^\dagger$ & 2.99$\to$25.38 & 0.31$\to$24.57 & ---$\to$0.13 & 1.89$\to$22.98 & 0.95$\to$0.76$^\dagger$ \\
Seaquest    & 0.15$\to$\textbf{1.84} & 0.46$\to$0.86 & 0.42$\to$0.95 & 0.12$\to$0.90 & 0.28$\to$0.20$^\dagger$ & 0.24$\to$0.89 & 0.38$\to$0.58 & ---$\to$0.13 & 0.37$\to$1.00 & 0.33$\to$0.35$^\dagger$ \\
Space Inv.  & 2.23$\to$42.64 & 0.27$\to$\textbf{54.09} & 0.00$\to$52.56 & 0.00$\to$41.60 & 4.52$\to$4.52$^\dagger$ & 0.00$\to$49.04 & 0.19$\to$35.52 & ---$\to$2.96 & 0.00$\to$48.58 & 0.00$\to$0.00$^\dagger$ \\
\midrule
Average     & 0.91$\to$17.55 & 4.50$\to$19.26 & 4.92$\to$19.17 & 0.67$\to$16.84 & 4.31$\to$4.29 & 0.83$\to$18.12 & 0.34$\to$14.81 & ---$\to$0.85 & 0.62$\to$18.07 & 0.38$\to$0.38 \\
\bottomrule
\end{tabular}%
}
\end{table}

\subsection{D4RL Continuous Control (Discretised)}
\label{sec:d4rl_results}

We evaluate on three MuJoCo locomotion tasks from D4RL (Hopper, Walker2d, and HalfCheetah) each with two dataset qualities (medium, expert).
Continuous action spaces are discretised into $k{=}22$ clusters via $k$-means on the offline dataset and scores follow the standard D4RL normalisation.
Details are in Appendix~\ref{app:d4rl_details}. DRIFT improves over its offline initialisation on all six tasks, achieving the best online score on Hopper (both qualities), Walker-medium, and HalfCheetah-expert.
The largest gains appear on Hopper, where DRIFT improves by $+47.8$ (medium) and $+46.7$ (expert) normalised points.
DRIFT's online improvement magnitude ($+24.0$ avg) is competitive with CQL ($+10.8$) and Cal-QL ($+11.7$), despite starting from a weaker offline initialisation (0.4 vs.\ 8.0 average offline score).

\begin{table}[t]
\centering
\caption{D4RL discretised results: \textit{offline}$\to$\textit{online} normalised score (mean, 5 seeds). Best online score per task in \textbf{bold}. All methods use $k$-means discretisation ($k{=}22$).}
\label{tab:d4rl_main}
\resizebox{\textwidth}{!}{%
\begin{tabular}{lcccccccccc}
\toprule
Task & \textbf{DRIFT} & CQL & IQL & AWAC & Cal-QL & DQN & PPO & PEX & SPA \\
\midrule
Hopper-med     & 0.1$\to$\textbf{47.9} & 28.8$\to$35.8 & 0.4$\to$27.9 & 26.0$\to$25.3 & 28.8$\to$44.1 & 0.4$\to$23.7 & ---$\to$3.1 & 0.4$\to$43.1 & 0.4$\to$0.4 \\
Hopper-exp     & 0.4$\to$\textbf{47.1} & 8.8$\to$29.3 & 0.1$\to$17.2 & 11.5$\to$11.2 & 8.8$\to$25.7 & $-$0.3$\to$21.7 & ---$\to$3.7 & 0.1$\to$39.5 & 0.3$\to$0.3 \\
Walker-med     & 1.0$\to$\textbf{15.9} & 2.5$\to$12.6 & 0.3$\to$13.0 & 4.2$\to$5.8 & 2.5$\to$11.7 & 1.8$\to$12.3 & ---$\to$7.2 & $-$0.3$\to$11.8 & 1.8$\to$1.8 \\
Walker-exp     & 0.1$\to$14.8 & 4.0$\to$10.7 & $-$0.2$\to$10.1 & 9.4$\to$9.0 & 4.0$\to$\textbf{12.4} & 0.2$\to$6.5 & ---$\to$7.0 & 0.2$\to$7.2 & 0.2$\to$0.2 \\
Cheetah-med    & 0.2$\to$11.1 & 3.3$\to$15.8 & 0.2$\to$7.8 & 13.1$\to$12.7 & 3.3$\to$15.8 & 2.1$\to$\textbf{16.9} & ---$\to$4.9 & 0.2$\to$14.6 & 2.1$\to$2.1 \\
Cheetah-exp    & 0.5$\to$\textbf{9.7} & 0.7$\to$8.6 & 0.2$\to$6.8 & 1.1$\to$1.0 & 0.7$\to$8.5 & 0.1$\to$4.7 & ---$\to$5.9 & 0.2$\to$7.7 & 0.1$\to$0.1 \\
\midrule
Average        & 0.4$\to$24.4 & 8.0$\to$18.8 & 0.2$\to$13.8 & 10.9$\to$10.8 & 8.0$\to$19.7 & 0.7$\to$14.3 & ---$\to$5.3 & 0.1$\to$20.6 & 0.8$\to$0.8 \\
\bottomrule
\end{tabular}%
}
\end{table}


\paragraph{Ablation Study}

We run single-factor ablations in a controlled tabular setting (10 states, 50 actions, 50K online steps) and provide full results and analysis in Appendix~\ref{app:ablations}. Path-space KL ($\alpha{=}0.01$) outperforms terminal KL and no regularization, lower temperature ($\beta{=}0.1$) speeds convergence, frequent reference refreshes ($K{=}50$) avoid stale targets, and higher offline mixing ($\rho{=}0.75$) stabilizes critic learning. These settings do not directly transfer to MinAtar, where the original hyperparameters ($\beta{=}0.5$, $\alpha{=}0.1$, $K{=}500$, $\rho{=}0.25$) work better, showing the need for environment-specific tuning.

\section{Conclusion, Limitations and Future Work}
\label{sec:conclusion}

DRIFT extends generative policies to discrete action spaces and provides the first offline-to-online fine-tuning method for CTMC policies, with the largest gains in domains where action spaces are large, variable, or multimodal. DRIFT fine-tunes a pretrained CTMC generator using reward-weighted discrete flow matching and a path-space penalty that regularizes full CTMC trajectories rather than only terminal actions. A candidate-set mechanism scales the method to large action spaces using reference rollouts and uniform samples, with provable coverage guarantees. Experiments on MinAtar, discretized D4RL, and Jericho show reliable improvement: DRIFT achieves the best score on three of five MinAtar games and the highest average normalized score on Jericho (23.2\%), outperforming baselines and published CALM and KG-A2C results. Path-space regularization is the key stabilizer, making DRIFT the only method that improves on every game across all benchmarks.

Several limitations remain. Path-space KL prevents forgetting but may slow exploration when the offline reference is poor, suggesting adaptive schedules for $\alpha$. The candidate-set mechanism is theoretically supported and validated in ablations, but still needs large-scale testing in genuinely large action spaces such as real-time strategy games or combinatorial optimization. Extending DRIFT to these domains, as well as to multi-objective and multi-agent settings with factorized CTMC generators, can be a prominent future direction.

\small
\bibliographystyle{plainnat}
\bibliography{references}

\appendix

\appendix


\section{Offline Pre-Training Details}
\label{app:pretraining}

Our Stage~1 pre-training produces a reference generator $u_{\mathrm{ref}}$ from the offline dataset $\mathcal{D}_{\mathrm{off}}$.
This procedure is an \emph{advantage-weighted DFM} variant.

\paragraph{Critic pre-training.}
We pre-train two critic networks $Q_{\phi_1}, Q_{\phi_2}$ and a value network $V_\psi$ using standard TD learning on $\mathcal{D}_{\mathrm{off}}$:
\[
y_b = r_b + \gamma\,V_{\psi^-}(s_b')\,(1-d_b),
\qquad
\mathcal{L}_{Q_k}(\phi_k) = \frac{1}{B}\sum_b \bigl(Q_{\phi_k}(s_b,a_b) - \mathrm{sg}[y_b]\bigr)^2.
\]
The value network is trained via an advantage-weighted target.
This differs from the Boltzmann soft-value backup with in-support sampling.

\paragraph{Generator pre-training.}
We compute a target policy $\tilde{\pi}_{\mathrm{pre}}(a \mid s) \propto \exp(\bar{A}_{\mathrm{off}}(s,a)/\beta)$ over the \emph{full} action space using the pre-trained critics, where $\bar{A}_{\mathrm{off}}$ is the clipped, normalized pessimistic advantage. 
The generator is then trained via standard DFM to flow from $p_0 = \mathrm{Uniform}(\mathcal{A})$ to $\tilde{\pi}_{\mathrm{pre}}$ using the independent coupling (Eq.~\ref{eq:coupling}).

The pre-trained generator captures high-value behaviors from the offline data and provides a strong initialization for online fine-tuning.

\section{CTMC Simulation Procedure}
\label{app:ctmc_sim}

Algorithm~\ref{alg:ctmc_sim} describes the Euler discretization used to sample actions from the learned CTMC generator.
This procedure is used both during environment interaction (Line~\ref{line:collect} of Algorithm~\ref{alg:sketch}) and when constructing candidate sets via reference rollouts (Line~\ref{line:cand}).

\begin{algorithm}[h]
\caption{CTMC Euler Simulation}
\label{alg:ctmc_sim}
\begin{algorithmic}[1]
\REQUIRE State $s$, generator $u$, sub-steps $M$, step size $\Delta t = 1/M$
\STATE Sample $X_0 \sim \mathrm{Uniform}(\mathcal{A})$
\FOR{$m = 0$ \TO $M-1$}
    \STATE Set $t_m \leftarrow m\,\Delta t$ and current action $i \leftarrow X_m$
    \STATE Compute total exit rate $\lambda \leftarrow \sum_{j \neq i} u(i\!\to\!j, t_m \mid s)$
    \IF{$\lambda = 0$}
        \STATE $X_{m+1} \leftarrow i$ \COMMENT{no transition possible}
    \ELSE
        \STATE With probability $1 - \lambda\,\Delta t$: $X_{m+1} \leftarrow i$ \COMMENT{stay}
        \STATE With probability $\lambda\,\Delta t$: sample $X_{m+1} \sim u(\cdot, t_m \mid s) / \lambda$ \COMMENT{jump}
    \ENDIF
\ENDFOR
\RETURN Terminal action $X_M$
\end{algorithmic}
\end{algorithm}

The scheme corresponds to the standard naive Euler discretization of a CTMC~\citep[Eq.~(6.6)]{lipman2024flow} and incurs $\mathcal{O}(\Delta t)$ local error in transition probabilities. Validity requires $\lambda\,\Delta t \le 1$; in practice this is ensured by choosing $M$ sufficiently large.

\section{Proofs and Derivations}
\label{app:proofs}

\subsection{Independent Coupling: Construction and Mass Conservation}
\label{app:coupling}

\paragraph{Construction.}
Given the linear bridge 
$p_t(a \mid s) = (1-t)\, p_0(a) + t\, \tilde\pi(a \mid s)$ 
with $p_0 = \mathrm{Uniform}(\mathcal{A})$, its time derivative is 
$\dot p_t(a) = \tilde\pi(a \mid s) - p_0(a)$. 
We decompose $\dot p_t$ into its positive and negative parts:
\[
(\dot p_t(a))_+ = \max(\dot p_t(a), 0), \quad
(\dot p_t(a))_- = \max(-\dot p_t(a), 0), \quad
Z_t = \sum_a (\dot p_t(a))_+.
\]
The independent coupling assigns rates proportional to the product of outflow from $i$ and inflow to $j$:
\[
u_t^*(i\!\to\!j \mid s) = \frac{(\dot{p}_t(i))^-\,(\dot{p}_t(j))^+}{p_t(i \mid s)\,Z_t},
\qquad i \neq j,
\]
with $u_t^*(i\!\to\!j \mid s) = 0$ whenever $p_t(i \mid s) = 0$ or $Z_t = 0$.
This construction is valid because it produces nonnegative off-diagonal rates and the diagonal is set by conservation.

\paragraph{Interpretation in the fine-tuning context.}
The independent coupling spreads each source's outflow across all sinks proportionally to demand.
In the fine-tuning setting, this means that actions gaining probability under $\tilde{\pi}$ receive mass from \emph{all} actions losing probability, rather than from a single greedy source.
This produces a dense, stable supervision signal that reduces susceptibility to mode collapse during fine-tuning.

\paragraph{Proof of Proposition~\ref{prop:mass} (Mass Conservation).}
Substituting the generator into the Kolmogorov forward equation, the net flow into state $j$ is:
\begin{align*}
\mathrm{Net}(j)
&= \sum_{i \neq j} p_t(i)\,u_t^*(i\!\to\!j) - p_t(j)\sum_{k \neq j} u_t^*(j\!\to\!k) \\
&= \sum_{i \neq j} \frac{(\dot{p}_t(i))^-\,(\dot{p}_t(j))^+}{Z_t}
   - \frac{(\dot{p}_t(j))^-}{Z_t}\sum_{k \neq j} (\dot{p}_t(k))^+.
\end{align*}
Since $\sum_a \dot{p}_t(a) = 0$, total positive and negative parts are equal: $\sum_a (\dot{p}_t(a))^+ = \sum_a (\dot{p}_t(a))^- = Z_t$.
Therefore $\sum_{i \neq j} (\dot{p}_t(i))^- = Z_t - (\dot{p}_t(j))^-$ and $\sum_{k \neq j} (\dot{p}_t(k))^+ = Z_t - (\dot{p}_t(j))^+$.
Substituting and simplifying:
\[
\mathrm{Net}(j)
= \frac{(\dot{p}_t(j))^+\bigl(Z_t - (\dot{p}_t(j))^-\bigr) - (\dot{p}_t(j))^-\bigl(Z_t - (\dot{p}_t(j))^+\bigr)}{Z_t}
= (\dot{p}_t(j))^+ - (\dot{p}_t(j))^-
= \dot{p}_t(j). \qed
\]

\subsection{Path-Space KL Derivation}
\label{app:path_kl}

The KL divergence between CTMC path measures is a standard result from the theory of Markov jump processes~\citep{norris1998markov,kipnis2013scaling}.
We provide a self-contained derivation for completeness.

Let $\mathcal{P}_{u_\theta}$ and $\mathcal{P}_{u_{\mathrm{ref}}}$ denote the path measures of CTMCs with generators $u_\theta$ and $u_{\mathrm{ref}}$, respectively, both starting from the same initial distribution.
A CTMC path over $[0,1]$ consists of an initial state, a sequence of jump times $0 < t_1 < t_2 < \cdots < t_{N_{\mathrm{jump}}} \le 1$, and the corresponding jump destinations.
Between jumps, the chain holds at its current state.

The log-likelihood of a path under generator $u$ decomposes as:
\[
\log \mathcal{P}_u(\text{path})
= \sum_{k=1}^{N_{\mathrm{jump}}} \log u(X_{t_k^-}\!\to\!X_{t_k}, t_k \mid s)
- \int_0^1 \lambda_u(X_t, t \mid s)\,dt,
\]
where the first term accounts for the jumps that occurred and the second term accounts for the holding (no-jump) periods.
The KL divergence follows directly:
\begin{align*}
\mathrm{KL}(\mathcal{P}_{u_\theta}\!\parallel\!\mathcal{P}_{u_{\mathrm{ref}}})
&= \mathbb{E}_{\mathcal{P}_{u_\theta}}\!\left[
\log\frac{\mathcal{P}_{u_\theta}(\text{path})}{\mathcal{P}_{u_{\mathrm{ref}}}(\text{path})}
\right] \\
&= \mathbb{E}_{\mathcal{P}_{u_\theta}}\!\left[
\sum_{k:\mathrm{jumps}} \log\frac{u_\theta(X_{t_k^-}\!\to\!X_{t_k}, t_k)}{u_{\mathrm{ref}}(X_{t_k^-}\!\to\!X_{t_k}, t_k)}
+ \int_0^1 \bigl(\lambda_{\mathrm{ref}}(X_t,t) - \lambda_\theta(X_t,t)\bigr)dt
\right].
\end{align*}

\paragraph{Monte Carlo estimation.}
In practice, we estimate this by simulating one trajectory $X_0, X_1, \ldots, X_M$ from the current generator $u_\theta$ (with stop-gradient applied to the trajectory) and computing:
\[
\widehat{\mathrm{KL}}(s)
= \sum_{k:\mathrm{jumps}} \log\frac{u_\theta(X_{t_k^-}\!\to\!X_{t_k}, t_k \mid s)}{u_{\mathrm{ref}}(X_{t_k^-}\!\to\!X_{t_k}, t_k \mid s)}
+ \sum_{m=0}^{M-1} \bigl(\lambda_{\mathrm{ref}}(X_m, t_m \mid s) - \lambda_\theta(X_m, t_m \mid s)\bigr)\,\Delta t.
\]
Gradients flow through $\log u_\theta$ at jump times and through $\lambda_\theta$ along the trajectory, while the trajectory itself is treated as fixed.

\subsection{Proof of Proposition~\ref{prop:coverage} (Coverage Error)}

\begin{proof}
Let $\tilde\pi$ be the ideal full-action target policy and let
\[
\epsilon(s) = \sum_{a \notin \mathcal{A}_{\mathrm{cand}}(s)} \tilde\pi(a \mid s).
\]
Assume $\epsilon(s) < 1$. The candidate-restricted policy is
\[
\tilde\pi_{\mathrm{cand}}(a \mid s)
= \frac{\tilde\pi(a \mid s)\, \mathbf{1}\{a \in \mathcal{A}_{\mathrm{cand}}(s)\}}
       {1 - \epsilon(s)}.
\]
Therefore,
\[
\begin{aligned}
\|\tilde\pi_{\mathrm{cand}} - \tilde\pi\|_1
&= \sum_{a \in \mathcal{A}_{\mathrm{cand}}}
   \left| \frac{\tilde\pi(a)}{1 - \epsilon} - \tilde\pi(a) \right|
   + \sum_{a \notin \mathcal{A}_{\mathrm{cand}}} \tilde\pi(a) \\
&= \sum_{a \in \mathcal{A}_{\mathrm{cand}}}
   \tilde\pi(a) \cdot \frac{\epsilon}{1 - \epsilon} + \epsilon
= (1 - \epsilon) \cdot \frac{\epsilon}{1 - \epsilon} + \epsilon
= 2\epsilon.
\end{aligned}
\]
For the expectation, action $a$ is excluded only if it is missed by all
$N_{\mathrm{roll}}$ independent samples from $\pi_{\mathrm{ref}}$ and all
$N_{\mathrm{rand}}$ independent uniform samples. Hence
\[
\mathbb{P}(a \notin \mathcal{A}_{\mathrm{cand}}(s))
= (1 - \pi_{\mathrm{ref}}(a \mid s))^{N_{\mathrm{roll}}}
  (1 - 1/K)^{N_{\mathrm{rand}}}.
\]
Thus,
\[
\mathbb{E}[\epsilon(s)]
= \sum_{a \in \mathcal{A}}
  \tilde\pi(a \mid s)
  (1 - \pi_{\mathrm{ref}}(a \mid s))^{N_{\mathrm{roll}}}
  (1 - 1/K)^{N_{\mathrm{rand}}},
\]
and the result follows from
$\|\tilde\pi_{\mathrm{cand}} - \tilde\pi\|_1 = 2\epsilon(s)$.
\end{proof}

\subsection{Proof of~\cref{thm:generator-stability} (Generator Stability)}

\begin{proof}
The proof proceeds in three parts. First, we set up the notation and 
record elementary boundedness facts that will be used throughout. 
Second, we decompose the generator difference into three terms, 
each measuring a distinct source of perturbation. Third, we bound 
each term separately and combine the results.

\paragraph{Step 1: Setup and elementary bounds.}

Let
\[
d = \|\tilde\pi_{\mathrm{cand}} - \tilde\pi\|_1.
\]
By construction, both the full and candidate bridges use the same 
source $p_0 = \mathrm{Uniform}(\mathcal{A})$ (Eq.~\ref{eq:linear-bridge}). 
Therefore,
\[
\dot p_t^{\mathrm{cand}} - \dot p_t = \tilde\pi_{\mathrm{cand}} - \tilde\pi,
\qquad
\|\dot p_t^{\mathrm{cand}} - \dot p_t\|_1 = d.
\]

For any fixed source action $i$ and destination action $j \neq i$, let 
$p_t(i)$ and $p_t^{\mathrm{cand}}(i)$ denote the full and candidate 
bridge probabilities of action $i$ at flow time $t$. The maps 
$x \mapsto x_+ = \max\{x, 0\}$ and $x \mapsto x_- = \max\{-x, 0\}$ are 
$1$-Lipschitz \citep{rudin1976principles}, so
\[
\sum_a \big| (\dot p_t^{\mathrm{cand}}(a))_+ - (\dot p_t(a))_+ \big| \leq d,
\qquad
\sum_a \big| (\dot p_t^{\mathrm{cand}}(a))_- - (\dot p_t(a))_- \big| \leq d.
\]
The normalizers are
\[
Z_t = \sum_a (\dot p_t(a))_+, 
\qquad 
Z_t^{\mathrm{cand}} = \sum_a (\dot p_t^{\mathrm{cand}}(a))_+,
\]
which equal the corresponding total outgoing masses by 
$\sum_a \dot p_t(a) = 0$. Hence 
$Z_t = \tfrac{1}{2}\|\dot p_t\|_1$ and similarly for $Z_t^{\mathrm{cand}}$.

To simplify notation, we abbreviate the four quantities entering each 
rate (the source supply, sink demand, source probability, and total 
moving mass) using unsubscripted letters for the candidate bridge and 
a $0$-subscript for the full bridge:
\[
A := (\dot p_t^{\mathrm{cand}}(i))_-, 
\quad A_0 := (\dot p_t(i))_-, 
\quad B_j := (\dot p_t^{\mathrm{cand}}(j))_+, 
\quad B_j^0 := (\dot p_t(j))_+,
\]
and
\[
P := p_t^{\mathrm{cand}}(i), 
\quad P_0 := p_t(i), 
\quad Z := Z_t^{\mathrm{cand}}, 
\quad Z_0 := Z_t.
\]
With this notation,
\[
u_t^{*,\mathrm{cand}}(i \to j) = \frac{A B_j}{P Z}, 
\qquad 
u_t^*(i \to j) = \frac{A_0 B_j^0}{P_0 Z_0}.
\]
The lower bounds in the theorem statement give 
$P, P_0 \geq \underline{p} > 0$ and $Z, Z_0 \geq \underline{Z} > 0$. 
Furthermore, since $\|\dot p_t\|_1 \leq \|\tilde\pi\|_1 + \|p_0\|_1 = 2$, 
we have
\[
A,\, A_0,\, B_j,\, B_j^0 \in [0, 1], 
\qquad 
Z,\, Z_0 \in [0, 1], 
\qquad 
P,\, P_0 \in [0, 1].
\]
These elementary boundedness facts will be used repeatedly below.

\paragraph{Step 2: Three-term decomposition.}

Inserting and subtracting two intermediate terms,
\[
\frac{A B_j}{P Z} - \frac{A_0 B_j^0}{P_0 Z_0}
= \frac{(A - A_0) B_j}{P Z}
+ \frac{A_0 (B_j - B_j^0)}{P Z}
+ A_0 B_j^0 \left( \frac{1}{P Z} - \frac{1}{P_0 Z_0} \right).
\]
Summing over $j \neq i$ and applying the triangle inequality,
\[
\sum_{j \neq i} \big| u_t^{*,\mathrm{cand}}(i \to j) - u_t^*(i \to j) \big|
\leq \mathrm{(I)} + \mathrm{(II)} + \mathrm{(III)},
\]
where
\[
\mathrm{(I)} := \frac{|A - A_0|}{P Z} \sum_{j \neq i} B_j, 
\quad
\mathrm{(II)} := \frac{A_0}{P Z} \sum_{j \neq i} \big| B_j - B_j^0 \big|, 
\quad
\mathrm{(III)} := A_0 \sum_{j \neq i} B_j^0 \left| \frac{1}{P Z} - \frac{1}{P_0 Z_0} \right|.
\]
Each term measures a distinct perturbation: (I) the change in source 
supply, (II) the change in sink demand, and (III) the change in the 
denominator.

\paragraph{Step 3a: Bound on Term (I).}

Since $\sum_{j \neq i} B_j \leq Z$, the factor $Z$ cancels:
\[
\mathrm{(I)} \leq \frac{|A - A_0|}{P}.
\]
Using $P \geq \underline{p}$ and the $1$-Lipschitz property of $x \mapsto x_-$,
\[
|A - A_0| 
= \big| (\dot p_t^{\mathrm{cand}}(i))_- - (\dot p_t(i))_- \big| 
\leq \big| \dot p_t^{\mathrm{cand}}(i) - \dot p_t(i) \big| 
= \big| \tilde\pi_{\mathrm{cand}}(i) - \tilde\pi(i) \big| 
\leq d,
\]
where the final inequality uses that the pointwise difference is dominated 
by the $L_1$ norm. Therefore,
\[
\mathrm{(I)} \leq \frac{d}{\underline{p}}.
\]

\paragraph{Step 3b: Bound on Term (II).}

Since $A_0 \leq 1$, $P \geq \underline{p}$, and $Z \geq \underline{Z}$,
\[
\mathrm{(II)} \leq \frac{1}{\underline{p}\, \underline{Z}} \sum_{j \neq i} \big| B_j - B_j^0 \big|.
\]
The $1$-Lipschitz property of $x \mapsto x_+$ gives 
$|B_j - B_j^0| \leq |\dot p_t^{\mathrm{cand}}(j) - \dot p_t(j)| 
= |\tilde\pi_{\mathrm{cand}}(j) - \tilde\pi(j)|$ for each $j$, so 
summing yields $\sum_{j \neq i} |B_j - B_j^0| \leq d$. Therefore,
\[
\mathrm{(II)} \leq \frac{d}{\underline{p}\, \underline{Z}}.
\]

\paragraph{Step 3c: Bound on Term (III).}

We expand the denominator difference as
\[
\frac{1}{P Z} - \frac{1}{P_0 Z_0} 
= \frac{P_0 Z_0 - P Z}{P Z\, P_0 Z_0}
= \frac{(P_0 - P) Z_0 + P (Z_0 - Z)}{P Z\, P_0 Z_0}.
\]
Taking absolute values, the denominator is bounded below by 
$P Z\, P_0 Z_0 \geq \underline{p}^{\,2}\, \underline{Z}^{\,2}$. 
For the numerator, since $P \leq 1$ and $Z_0 \leq 1$,
\[
\big| (P_0 - P) Z_0 + P (Z_0 - Z) \big| 
\leq |P_0 - P| + |Z_0 - Z|.
\]
Using the shared-source structure of the bridges,
\[
P_0 - P = p_t(i) - p_t^{\mathrm{cand}}(i) 
= t\, (\tilde\pi(i) - \tilde\pi_{\mathrm{cand}}(i)),
\]
so $|P_0 - P| \leq |\tilde\pi(i) - \tilde\pi_{\mathrm{cand}}(i)| \leq d$ 
(pointwise dominated by $L_1$). Likewise, the $1$-Lipschitz property 
applied coordinate-wise and summed gives
\[
|Z_0 - Z| 
= \left| \sum_a (\dot p_t(a))_+ - \sum_a (\dot p_t^{\mathrm{cand}}(a))_+ \right|
\leq \sum_a \big| (\dot p_t(a))_+ - (\dot p_t^{\mathrm{cand}}(a))_+ \big|
\leq d.
\]
Hence the numerator is bounded by $2d$. The prefactor satisfies 
$A_0 \leq 1$ and $\sum_{j \neq i} B_j^0 \leq Z_0 \leq 1$, so 
$A_0 \sum_{j \neq i} B_j^0 \leq 1$. Therefore,
\[
\mathrm{(III)} 
\leq \frac{2d}{\underline{p}^{\,2}\, \underline{Z}^{\,2}}.
\]

\paragraph{Step 4: Combining the three bounds.}

Adding the three contributions,
\[
\sum_{j \neq i} \big| u_t^{*,\mathrm{cand}}(i \to j) - u_t^*(i \to j) \big|
\leq \frac{d}{\underline{p}} 
+ \frac{d}{\underline{p}\, \underline{Z}} 
+ \frac{2d}{\underline{p}^{\,2}\, \underline{Z}^{\,2}}.
\]
Since $\underline{p}, \underline{Z} \in (0, 1]$,
\[
\frac{1}{\underline{p}} \leq \frac{1}{\underline{p}^{\,2}\, \underline{Z}^{\,2}}, 
\qquad 
\frac{1}{\underline{p}\, \underline{Z}} \leq \frac{1}{\underline{p}^{\,2}\, \underline{Z}^{\,2}}.
\]
All three terms can therefore be upper bounded by the common denominator, 
yielding
\[
\sum_{j \neq i} \big| u_t^{*,\mathrm{cand}}(i \to j) - u_t^*(i \to j) \big|
\leq \frac{4}{\underline{p}^{\,2}\, \underline{Z}^{\,2}}\, 
\|\tilde\pi_{\mathrm{cand}} - \tilde\pi\|_1.
\]
This establishes the theorem with universal constant $C = 4$.
\end{proof}

\begin{remark}
The theorem isolates the error caused by restricting the terminal target 
policy to $\mathcal{A}_{\mathrm{cand}}$. When the candidate bridge also 
changes the source distribution from $p_0$ to $p_0^{\mathrm{cand}}$, an 
additional term $\|p_0^{\mathrm{cand}} - p_0\|_1$ enters the bound through 
$\|\dot p_t^{\mathrm{cand}} - \dot p_t\|_1$.
\end{remark}

\begin{corollary}[Uniform-in-Time Stability]
\label{cor:uniform_stability}
Assume that the full and candidate bridges use the same source 
distribution $p_0$. Fix $\delta \in (0, 1)$ and suppose that, for all 
$t \in [0, 1-\delta]$,
\[
p_t(i),\; p_t^{\mathrm{cand}}(i) \geq \underline{p} > 0, 
\qquad 
Z_t,\; Z_t^{\mathrm{cand}} \geq \underline{Z} > 0.
\]
Then, for every such $t$ and every $i \in \mathcal{A}_{\mathrm{cand}}$,
\[
\sum_{j \neq i} \big| u_t^{*,\mathrm{cand}}(i \to j) - u_t^*(i \to j) \big|
\leq \frac{C}{\underline{p}^{\,2}\, \underline{Z}^{\,2}}\, 
\|\tilde\pi_{\mathrm{cand}} - \tilde\pi\|_1,
\]
where $C > 0$ is a universal constant.

If $\underline{p}$ and $\underline{Z}$ are deterministic constants, 
then combining this bound with Proposition~\ref{prop:coverage} gives
\[
\mathbb{E}\!\left[
\sum_{j \neq i} \big| u_t^{*,\mathrm{cand}}(i \to j) - u_t^*(i \to j) \big|
\right]
\leq \frac{2C}{\underline{p}^{\,2}\, \underline{Z}^{\,2}}
\sum_{a \in \mathcal{A}} \tilde\pi(a \mid s)
(1 - \pi_{\mathrm{ref}}(a \mid s))^{N_{\mathrm{roll}}}
(1 - 1/K)^{N_{\mathrm{rand}}}.
\]
\end{corollary}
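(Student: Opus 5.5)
The statement is Corollary~\ref{cor:uniform_stability}, and the plan is to obtain it almost entirely from Theorem~\ref{thm:generator-stability} and Proposition~\ref{prop:coverage}, in two stages.

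\textbf{First stage: the pointwise bound.} Fix any $t\in[0,1-\delta]$ and any source action $i\in\mathcal{A}_{\mathrm{cand}}$. The hypotheses of the corollary are exactly those of Theorem~\ref{thm:generator-stability} at that $t$:
\begin{itemize}
\item a shared source $p_0$;
\item the lower bound $p_t(i),p_t^{\mathrm{cand}}(i)\ge\underline p$;
\item the lower bound $Z_t,Z_t^{\mathrm{cand}}\ge\underline Z$.
\end{itemize}
So the first display follows directly from the theorem, with $C=4$ from its proof. The point to stress is that $C$ does not depend on $t$. The only $t$-dependence in the proof of Theorem~\ref{thm:generator-stability} is through $P_0-P=t(\tilde\pi(i)-\tilde\pi_{\mathrm{cand}}(i))$, which is bounded using $t\le 1$. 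The quantities $\dot p_t$, $(\dot p_t)_\pm$ and $Z_t$ do not depend on $t$ at all. The bound is therefore uniform over the interval $[0,1-\delta]$. Restricting to $i\in\mathcal{A}_{\mathrm{cand}}$ costs nothing, since the theorem covers every $i$.

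\textbf{Second stage: the expectation bound.} Now treat the candidate set as random, built from $N_{\mathrm{roll}}$ reference rollouts and $N_{\mathrm{rand}}$ uniform draws. Because $\underline p,\underline Z$ are assumed deterministic constants, we can take expectations of both sides of the pointwise bound. Linearity and monotonicity of expectation then give
\[
\mathbb{E}\Bigl[\textstyle\sum_{j\neq i}\bigl|u_t^{*,\mathrm{cand}}(i\to j)-u_t^*(i\to j)\bigr|\Bigr]\le \frac{C}{\underline p^{2}\underline Z^{2}}\,\mathbb{E}\bigl[\|\tilde\pi_{\mathrm{cand}}-\tilde\pi\|_1\bigr].
\]
Substituting the exact expectation from Proposition~\ref{prop:coverage} yields the factor $2C$ and the stated sum.

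\textbf{Main obstacle: making the expectation step rigorous.} The hypotheses $p_t^{\mathrm{cand}}(i)\ge\underline p$ and $Z^{\mathrm{cand}}_t\ge\underline Z$ depend on the random candidate set. So does $\epsilon(s)<1$, which Proposition~\ref{prop:coverage} needs for $\tilde\pi_{\mathrm{cand}}$ to be defined. The source action $i$ is also required to lie in the random set.

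My plan is to read the assumption as holding almost surely over realizations of $\mathcal{A}_{\mathrm{cand}}$ that contain $i$. Under that reading the pointwise inequality holds on every realization and integrates cleanly.

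If instead one wants to allow failure events, I would split the expectation on the good event $G$ where the bounds hold. On $G^c$ the generator difference is bounded crudely by something like $2/\underline p$, since $A,B_j\le1$ and $\sum_j B_j\le Z$. That would add a term proportional to $\mathbb{P}(G^c)$. Under the stated "deterministic constants" assumption this term vanishes, so I would adopt the almost-sure reading and note this caveat.
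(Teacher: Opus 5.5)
Your proposal is correct and follows the argument the paper intends; the paper gives no separate proof of the corollary. Apply Theorem~\ref{thm:generator-stability} at each $t\in[0,1-\delta]$ with the $t$-independent constant $C=4$, then take expectations and substitute the formula of Proposition~\ref{prop:coverage}.

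One refinement to your ``main obstacle'' paragraph: do not condition on $i\in\mathcal{A}_{\mathrm{cand}}$. Conditioning changes the law of $\mathcal{A}_{\mathrm{cand}}$, so the unconditional formula of Proposition~\ref{prop:coverage} would no longer apply. Since the theorem holds for every fixed $i\in\mathcal{A}$, integrate unconditionally. The randomness of the hypotheses is harmless anyway, because you may take the deterministic choices $\underline p=\delta/|\mathcal{A}|$ and $\underline Z=\min(Z_t,1/|\mathcal{A}|)$, which hold for every realization of the candidate set.
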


\begin{remark}
\label{rem:lower_bound}
Using a uniform prior $p_0$ and truncating the flow at $t \leq 1 - \delta$ 
ensures $p_t(i) \geq \delta / |\mathcal{A}|$, providing a practical lower 
bound for $\underline{p}$.
\end{remark}


\section{Ablations}
\label{app:ablations}

All ablation experiments use the Stochastic Goal Gridworld (Toy~5, $|\mathcal{A}|=64$): a $20{\times}20$ grid with four goals whose locations are hidden from the agent. Two ``warm'' goals G0--G1 (reward $+10$) appear in the offline data, two ``cold'' goals G2--G3 (reward $+15$) are \emph{never seen} during offline training. A cross-wall layout with 2-cell bottleneck gaps forces indirect navigation. Step penalty $-0.2$, max 40 steps per episode. This environment isolates offline-to-online transfer and cold-goal discovery. Unless otherwise stated, non-swept hyperparameters are held at $\alpha{=}0.15$, $M{=}10$, $\beta{=}0.4$, $\rho{=}0.25$, averaged over 3 seeds. Figure~\ref{fig:ablations_full} and Table~\ref{tab:ablation_all} present the complete results; the top row shows mean reward and the bottom row shows cold-goal visits (G2+G3) across 100 evaluation episodes.

\begin{figure}[p]
\centering
\includegraphics[width=\textwidth]{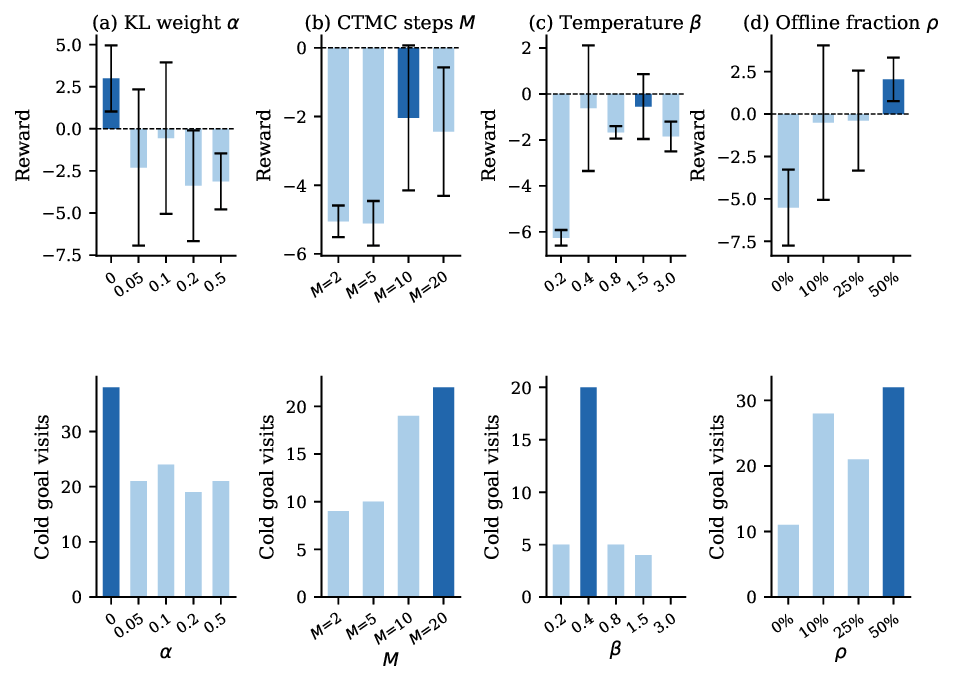}
\caption{Full hyperparameter ablation on Toy~5 ($|\mathcal{A}|{=}64$, 3 seeds). \textbf{Top:} mean evaluation reward (higher is better). \textbf{Bottom:} cold-goal visits out of 100 evaluation episodes (higher = better transfer to unseen goals). Dark bars mark the best value in each sweep. (a)~KL weight: $\alpha{=}0$ best (no distribution shift in this env). (b)~CTMC steps: sharp quality threshold at $M{=}10$. (c)~Temperature: $\beta{=}0.4$ best balances sharpness and exploration. (d)~Offline fraction: monotonic improvement up to $\rho{=}0.5$.}
\label{fig:ablations_full}
\end{figure}

\begin{table}[ht]
\centering
\small
\caption{Complete ablation results on Toy~5 (mean $\pm$ std, 3 seeds). ``Cold'' = visits to G2+G3 (unseen in offline data, reward $+15$) out of 100 evaluation episodes.}
\label{tab:ablation_all}
\begin{tabular}{llcccc}
\toprule
Ablation & Config & Reward & Std & Cold visits & Runtime \\
\midrule
\multirow{5}{*}{KL weight $\alpha$}
  & $\alpha = 0.0$  & $\mathbf{+2.99}$ & 1.96 & \textbf{38} & 169 min \\
  & $\alpha = 0.05$ & $-2.30$ & 4.64 & 21 & 180 min \\
  & $\alpha = 0.1$  & $-0.55$ & 4.50 & 24 & 180 min \\
  & $\alpha = 0.2$  & $-3.38$ & 3.29 & 19 & 182 min \\
  & $\alpha = 0.5$  & $-3.12$ & 1.66 & 21 & 202 min \\
\midrule
\multirow{4}{*}{CTMC steps $M$}
  & $M = 2$  & $-5.05$ & 0.46 &  9 &  59 min \\
  & $M = 5$  & $-5.11$ & 0.65 & 10 & 114 min \\
  & $M = 10$ & $\mathbf{-2.04}$ & 2.11 & 19 & 179 min \\
  & $M = 20$ & $-2.44$ & 1.87 & \textbf{22} & 330 min \\
\midrule
\multirow{5}{*}{Temperature $\beta$}
  & $\beta = 0.2$ & $-6.26$ & 0.34 &  5 & --- \\
  & $\beta = 0.4$ & $\mathbf{-0.62}$ & 2.73 & \textbf{20} & --- \\
  & $\beta = 0.8$ & $-1.67$ & 0.27 &  5 & --- \\
  & $\beta = 1.5$ & $-0.55$ & 1.41 &  4 & --- \\
  & $\beta = 3.0$ & $-1.85$ & 0.65 &  0 & --- \\
\midrule
\multirow{4}{*}{Offline frac.\ $\rho$}
  & $\rho = 0\%$  & $-5.52$ & 2.24 & 11 & --- \\
  & $\rho = 10\%$ & $-0.51$ & 4.55 & 28 & --- \\
  & $\rho = 25\%$ & $-0.39$ & 2.95 & 21 & --- \\
  & $\rho = 50\%$ & $\mathbf{+2.04}$ & 1.28 & \textbf{32} & --- \\
\bottomrule
\end{tabular}
\end{table}

\subsection{KL Weight $\alpha$}
\label{app:abl_kl}

$\alpha$ controls how strongly the fine-tuned generator is regularised toward the pre-trained reference. On Toy~5, $\alpha{=}0$ achieves the highest reward ($+2.99$) and the most cold-goal visits (38/100); all non-zero values perform worse. This is expected: Toy~5 has no distribution shift between offline and online phases, so KL regularisation restricts exploration of cold goals without any compensating stability benefit---the pre-trained policy was trained only on warm goals, so penalising drift from it directly penalises cold-goal discovery. The KL penalty's value surfaces when forgetting is costly, as demonstrated in the Goal-Switch environment (Section~\ref{app:goalswitch}), where the offline skills remain useful after the reward change.

\subsection{CTMC Sub-Steps $M$}
\label{app:abl_ctmc}

$M$ controls the resolution of the CTMC Euler discretisation. There is a clear quality threshold at $M{=}10$: using $M{=}2$ or $M{=}5$ causes a ${\sim}3$-point reward drop (to ${\approx}{-5.1}$) and reduces cold-goal visits from 19 to 9. Going to $M{=}20$ provides no further reward improvement but nearly doubles runtime (330 vs.\ 179 min). $M{=}10$ is therefore the right operating point: sufficient flow resolution without the cost of finer discretisation.

\subsection{Temperature $\beta$}
\label{app:abl_temp}

$\beta$ controls the sharpness of advantage weighting in $\tilde\pi(a|s) \propto \pi_{\mathrm{ref}}(a|s)\exp(\bar{A}(s,a)/\beta)$. Very low $\beta{=}0.2$ makes the target so sharp that the policy locks into offline-biased Q-values and barely reaches any goal (reward $-6.26$). Very high $\beta{\ge}1.5$ spreads probability too uniformly: the policy finds warm goals but never discovers cold ones (0 cold visits at $\beta{=}3.0$). The sweet spot is $\beta{=}0.4$, achieving the best reward ($-0.62$) and the most balanced goal coverage (14 warm + 20 cold visits).

\subsection{Offline Replay Fraction $\rho$}
\label{app:abl_offline}

$\rho$ controls how much of each training minibatch comes from the offline dataset during fine-tuning. The result is unambiguous: reward increases \emph{monotonically} from $\rho{=}0$ ($-5.52$) to $\rho{=}0.5$ ($+2.04$), with the largest single gain at the $0{\to}10\%$ jump ($+5.01$ reward). Cold-goal visits also increase monotonically (11 at 0\%, 32 at 50\%). Stable Q-values anchored by offline data enable more effective exploration of unseen goals, directly supporting the paper's claim that mixed replay is essential for offline-to-online transfer.

\subsection{Pre-Training Ablation (Cold-Start)}
\label{app:abl_coldstart}

Table~\ref{tab:coldstart} and Figure~\ref{fig:coldstart} isolate the contribution of each pre-training stage.

Removing Stage~1b (generator pre-training via advantage-weighted DFM) costs 1.5 reward points ($-3.54$ vs.\ $-2.04$) and halves cold-goal discovery (12 vs.\ 19 visits). Removing Stage~1a (critic pre-training) gives the highest mean reward ($+0.20$) but with very high variance (std $= 6.04$), indicating unreliable performance across seeds. Removing both stages ($-3.71$) performs similarly to removing Stage~1b alone. The full method achieves the best balance of performance and consistency. Pre-training costs only ${\sim}0.2$ minutes on CPU---negligible against the ${\sim}180$-minute online phase---and its benefit will be more pronounced on harder tasks where online steps alone are insufficient for convergence.

\begin{table}[t]
\centering
\small
\caption{Cold-start ablation: contribution of each pre-training stage on Toy~5 (mean $\pm$ std, 3 seeds).}
\label{tab:coldstart}
\begin{tabular}{lcccc}
\toprule
Configuration & Reward & Std & Cold visits & Runtime \\
\midrule
Full (Stage 1a + 1b)       & $-2.04$ & 2.11 & 19          & 182 min \\
No Stage 1b (no generator) & $-3.54$ & 1.71 & 12          & 182 min \\
No Stage 1a (no critic)    & $\mathbf{+0.20}$ & 6.04 & \textbf{24} & 178 min \\
No pre-training            & $-3.71$ & 2.50 & 20          & 164 min \\
\bottomrule
\end{tabular}
\end{table}

\begin{figure}[t]
\centering
\includegraphics[width=0.78\textwidth]{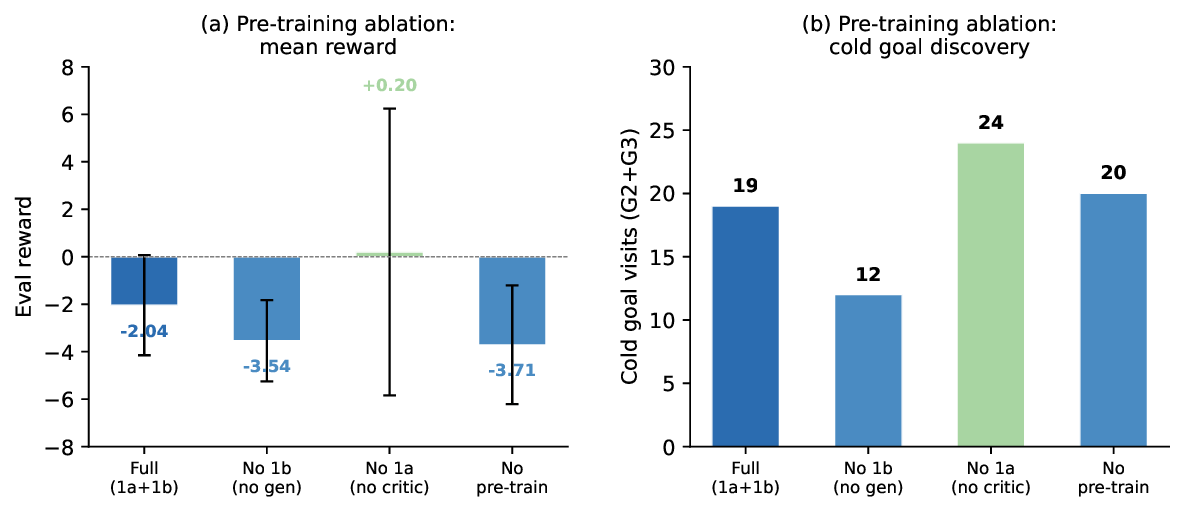}
\caption{Cold-start ablation on Toy~5. (a)~Mean reward: removing Stage~1a gives the highest mean but the largest variance. (b)~Cold-goal discovery: all configurations find cold goals, but the full method delivers the most reliable balance.}
\label{fig:coldstart}
\end{figure}

\subsection{Path-Space KL Behaviour}
\label{app:kl_trace}

Figure~\ref{fig:kl_trace_ap} shows the path-space KL divergence during online fine-tuning on MinAtar Asterix. The KL displays a characteristic ``spike-then-settle'' pattern: it rises to ${\approx}1$--$2$ during the first 50{,}000 gradient steps as the policy actively updates, then decays to near zero as the reference generator is refreshed and the two policies converge. This is direct evidence that the regularisation is \emph{active}---it genuinely constrains the rate of policy change rather than being dormant. Table~\ref{tab:kl_behaviour} summarises KL behaviour across all environments, confirming consistency with the theoretical bounded-divergence guarantee.

\begin{table}[t]
\centering
\small
\caption{Path-space KL behaviour across experiments.}
\label{tab:kl_behaviour}
\begin{tabular}{lll}
\toprule
Environment & KL behaviour & Consistent with theory? \\
\midrule
Toy~3 ($|\mathcal{A}|=16$)  & $\approx 0$ throughout        & Yes (simple task) \\
Toy~4 ($|\mathcal{A}|=128$) & $\approx 0$ throughout        & Yes (simple task) \\
Toy~5 ($|\mathcal{A}|=64$)  & $\approx 0$ throughout        & Yes (no shift) \\
MinAtar Asterix              & Spike to 1--2, settles by 50K & Yes (strongest evidence) \\
\bottomrule
\end{tabular}
\end{table}

\begin{figure}[t]
\centering
\includegraphics[width=0.50\textwidth]{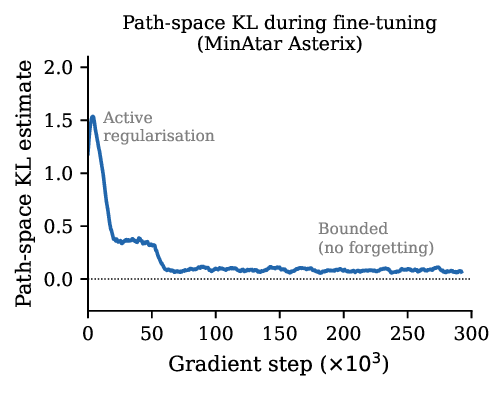}
\caption{Path-space KL divergence during online fine-tuning on MinAtar Asterix (300{,}000 gradient steps, smoothed). The KL spikes early and settles to near zero by step 50{,}000, confirming bounded divergence from the reference policy.}
\label{fig:kl_trace_ap}
\end{figure}

\section{Toy Experiments}
\label{app:toy_exps}

\subsection{Environments}
\label{app:toy_envs}

We evaluate DRIFT on five controlled environments, each designed to isolate a specific property of the method. All toy experiments use two-layer MLPs with 256 hidden units, Adam optimiser ($\eta{=}3{\times}10^{-4}$), discount $\gamma{=}0.95$, and soft-update rate $\tau{=}0.005$. Results are averaged over 3 seeds (42, 123, 2024) unless stated otherwise.

\paragraph{Toy~3: Multimodal Gridworld ($|\mathcal{A}|=16$).}
A $6{\times}6$ grid with 16 actions (8 directions $\times$ 2 step sizes) and three goals at $(0,0)$, $(0,5)$, $(5,5)$ giving rewards $+10$, $+10$, $+6$. The agent starts at centre; a wall barrier at row~2 forces indirect navigation; step penalty $-0.1$. The offline dataset (400 episodes) is biased 85\% toward step-1 actions and 60\% toward goal $(0,0)$. \textbf{Tests:} multimodal policy coverage despite biased offline data.

\paragraph{Toy~4: Large-Action Gridworld ($|\mathcal{A}|=128$).}
A $12{\times}12$ grid with 128 actions (8 directions $\times$ 16 step sizes). Three goals at $(0,0)$, $(0,11)$, $(11,11)$ give rewards $+10$, $+10$, $+6$. Optimal value from start state $(6,5)$ is $V^*(s_0){=}9.5$ (by value iteration). Many actions correspond to large steps into walls and are effectively ``dead.'' Offline dataset: 600 biased episodes. \textbf{Tests:} candidate-set approximation in a genuinely large, sparse action space.

\paragraph{Toy~5: Stochastic Goal Gridworld ($|\mathcal{A}|=64$).}
Described above (Section~\ref{app:ablations}). \textbf{Tests:} offline-to-online transfer and discovery of unseen high-reward goals. Serves as the primary ablation testbed.

\paragraph{Goal-Switch Environment ($|\mathcal{A}|=16$).}
A $6{\times}6$ grid with 16 actions. The reward structure changes at step 25{,}000: goal moves from position A (Phase~1) to position B (Phase~2). The offline dataset is collected under Phase~1. \textbf{Tests:} adaptation to reward shift while retaining offline navigation skills; primary demonstration of path-space KL regularisation.

\paragraph{Combinatorial Lock Gridworld ($|\mathcal{A}|=144$).}
A $16{\times}16$ grid with 144 actions (12 directions $\times$ 12 step sizes) and four quadrants, each with a cluster of 3 keys and 1 goal chamber. At each episode, one quadrant is randomly designated ``active'' (hidden from the agent); the agent must collect all 3 keys from that quadrant's cluster, then enter its goal. Picking up any key commits the agent to that quadrant. Rewards: $+1$/key, $+10$ for correct goal with all keys, $-3$ for wrong goal, $-0.1$/step, max 80 steps. Offline data covers only quadrants 0 and 1. \textbf{Tests:} multimodal action coverage and candidate-set utility in a large action space where random exploration is ineffective.

\subsection{Toy~3: Multimodal Coverage}
\label{app:toy3}

Table~\ref{tab:toy3} and Figure~\ref{fig:toy3} present results on the Multimodal Gridworld. Both DRIFT and DQN reach the reward ceiling of 10.0, while PPO achieves only 7.76 within the 3{,}000-step budget. The critical difference is in \emph{policy structure}.

DQN sends \emph{all 50 evaluation episodes} to goal G$_2$ at $(0,5)$, completely ignoring the two other goals---classic mode collapse, where the deterministic $\arg\max$ policy commits to the single goal with the highest Q-value. DRIFT distributes visits across goals (17 to G$_1$, 33 to G$_2$), preserving coverage of multiple modes.

The offline-only comparison is equally telling. Offline DQN scores 0.00 despite converging Q-loss, due to greedy action extraction causing action cycles on out-of-distribution states. The DRIFT pre-trained policy achieves 2.92 and visits all goal types, confirming that the CTMC representation transfers more robustly from offline data than deterministic value-based extraction.

\begin{table}[t]
\centering
\small
\caption{Toy~3 results: Multimodal Gridworld ($6{\times}6$, $|\mathcal{A}|{=}16$). Goal visits from 50 evaluation episodes. Both DRIFT and DQN hit the reward ceiling, but DQN mode-collapses to a single goal.}
\label{tab:toy3}
\begin{tabular}{lcccccc}
\toprule
Method & Reward & Std & G$_1$ (+10) & G$_2$ (+10) & G$_3$ (+6) & Coverage \\
\midrule
DRIFT (ours)       & 10.00 & 0.00 & 17 & 33 & 0 & \textbf{bimodal} \\
DQN                & 10.00 & 0.00 &  0 & 50 & 0 & single mode \\
PPO                &  7.76 & 3.68 &  2 & 32 & 8 & partial \\
\midrule
DRIFT offline-only &  2.92 & 4.49 &  9 &  5 & 1 & trimodal \\
DQN offline-only   &  0.00 & ---  &  0 &  0 & 0 & collapsed \\
\bottomrule
\end{tabular}
\end{table}

\begin{figure}[t]
\centering
\includegraphics[width=0.78\textwidth]{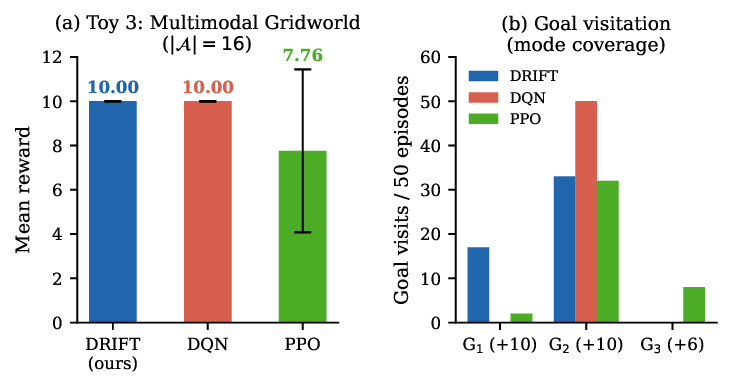}
\caption{Toy~3 results. (a)~Mean reward: DRIFT and DQN both reach the ceiling; PPO falls short. (b)~Goal visitation: DQN mode-collapses to G$_2$; DRIFT maintains coverage of both $+10$ goals.}
\label{fig:toy3}
\end{figure}

\subsection{Toy~4: Candidate-Set Phase Transition}
\label{app:toy4}

Toy~4 validates the candidate-set approximation theory (Proposition~\ref{prop:coverage}) in a controlled large-action-space setting.

\paragraph{Candidate-set ablation.}
Figure~\ref{fig:toy4}(a) shows final reward as the candidate budget $|\mathcal{A}_{\mathrm{cand}}|$ increases from 4 to 128. There is a sharp phase transition at budget~24 (${\approx}16\%$ action-space coverage): below this threshold performance is unstable; above it the agent reliably achieves the optimal reward of 10.0 (Table~\ref{tab:toy4_ablation}). This directly validates the theoretical prediction that generator error decays as candidate coverage increases.

\paragraph{Method comparison.}
Figure~\ref{fig:toy4}(b) shows that DQN and PPO achieve a perfect score of 10.0 online, while DRIFT scores 9.36. However, the DRIFT \emph{pre-trained} policy (before any online interaction) already achieves 9.52, whereas DQN offline scores 0.00. The advantage-weighted DFM pre-training thus produces a usable policy from offline data alone, something greedy DQN extraction cannot.

\paragraph{Goal diversity.}
Figure~\ref{fig:toy4}(c) shows that DRIFT distributes episodes across all three goals (13, 29, 8), while DQN sends all 50 to G$_2$---replicating the Toy~3 mode-collapse finding at $|\mathcal{A}|{=}128$.

\begin{table}[t]
\centering
\small
\caption{Toy~4 candidate-set ablation ($|\mathcal{A}|{=}128$). The phase transition occurs at budget~24 (${\approx}16\%$ coverage).}
\label{tab:toy4_ablation}
\begin{tabular}{rrrcc}
\toprule
Budget & $N_{\mathrm{roll}}$ & $N_{\mathrm{rand}}$ & Coverage & Mean reward \\
\midrule
  4 &  1 &  3 &  3.1\% &  8.8 \\
  8 &  2 &  6 &  6.0\% &  8.1 \\
 12 &  4 &  8 &  9.2\% &  7.7 \\
 16 &  5 & 11 & 11.8\% &  8.7 \\
 24 &  8 & 16 & 15.9\% & \textbf{10.0} \\
 32 & 10 & 22 & 19.2\% & 10.0 \\
 64 & 21 & 43 & 35.2\% & 10.0 \\
128 & 42 & 86 & 57.4\% & 10.0 \\
\bottomrule
\end{tabular}
\end{table}

\begin{figure}[t]
\centering
\includegraphics[width=\textwidth]{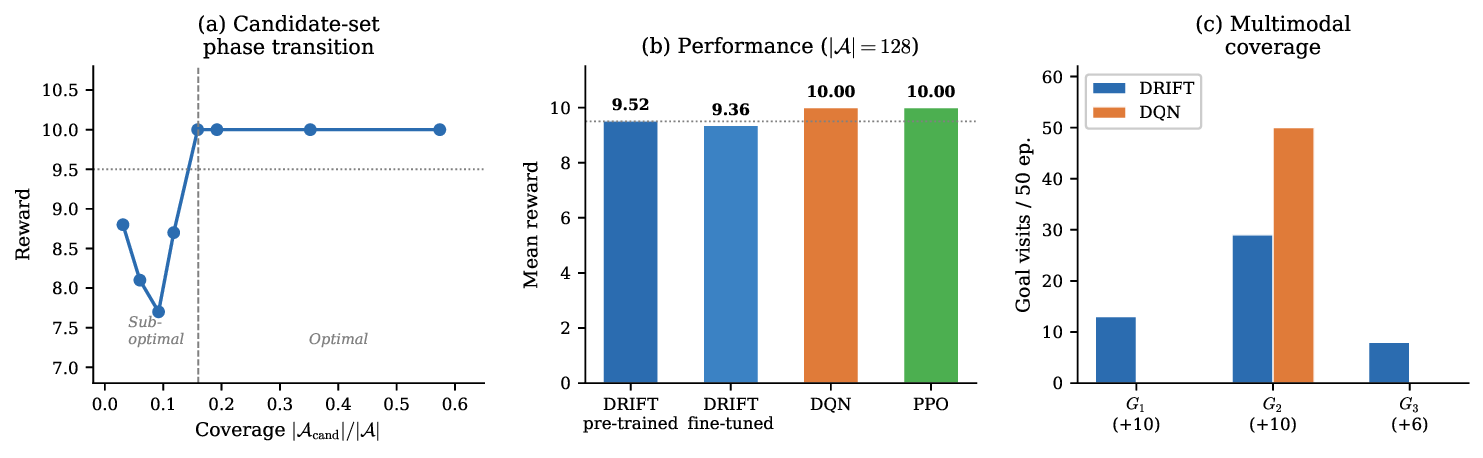}
\caption{Toy~4 results ($12{\times}12$, $|\mathcal{A}|{=}128$). (a)~Candidate-set phase transition: performance saturates above 16\% coverage (dashed line); $V^*{=}9.5$ shown dotted. (b)~Method comparison: DRIFT pre-trained achieves 9.52 without any online interaction; DQN offline scores 0.00. (c)~Goal diversity: DRIFT visits all three goals; DQN mode-collapses.}
\label{fig:toy4}
\end{figure}

\subsection{Goal-Switch: Adaptation to Distribution Shift}
\label{app:goalswitch}

The Goal-Switch environment measures how quickly each method adapts when the reward structure changes at step 25{,}000. Table~\ref{tab:goalswitch} and Figure~\ref{fig:goalswitch} present results.

DRIFT reaches the switched goal in 1.42 steps on average---$1.9{\times}$ faster than DQN (2.69) or PPO (2.63). All three methods ultimately solve the task, but DRIFT's faster adaptation means less wasted exploration after the switch. This speed advantage comes from the CTMC generator's ability to rapidly redistribute probability mass over actions when the target policy $\tilde\pi$ changes. DQN must propagate new Q-values through slow Bellman backups before its $\arg\max$ policy shifts; DRIFT's flow-matching objective directly reshapes the action distribution in a single generator update.

\begin{table}[t]
\centering
\small
\caption{Goal-Switch results (mean $\pm$ std, 3 seeds). ``Steps to goal'' = environment steps to reach the new goal location after the reward switch at step 25{,}000 (lower is better).}
\label{tab:goalswitch}
\begin{tabular}{lccc}
\toprule
Metric & \textbf{DRIFT (Ours)} & DQN & PPO \\
\midrule
Steps to goal $\downarrow$ & $\mathbf{1.42 \pm 0.22}$ & $2.69 \pm 0.49$ & $2.63 \pm 0.43$ \\
Goal rate (\%)             & $\mathbf{100.0}$          & $99.0$           & $90.6$ \\
Mean eval reward           & $14.33 \pm 0.08$          & $14.07 \pm 0.11$ & $14.27 \pm 0.21$ \\
\bottomrule
\end{tabular}
\end{table}

\begin{figure}[t]
\centering
\includegraphics[width=0.78\textwidth]{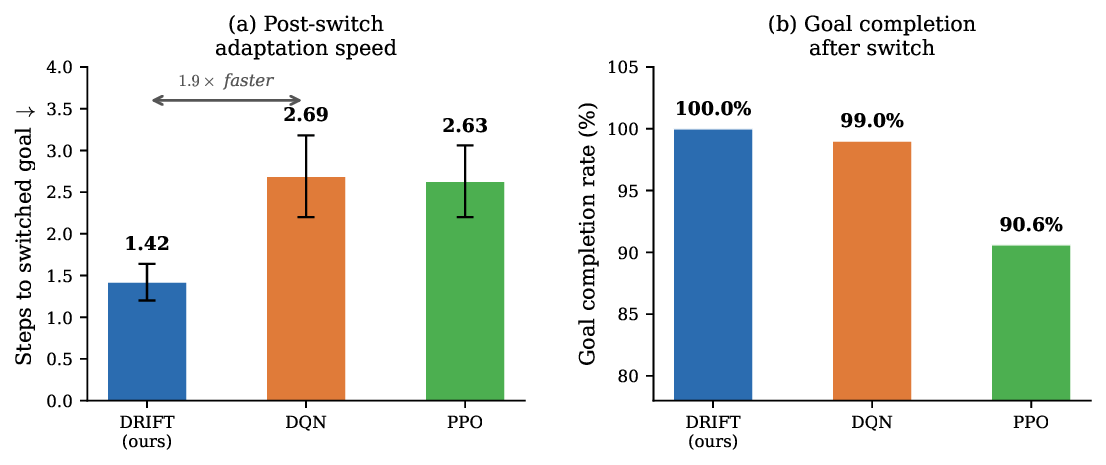}
\caption{Goal-Switch results. (a)~Steps to reach the switched goal (lower is better): DRIFT adapts $1.9{\times}$ faster than DQN. (b)~Goal completion rate: DRIFT achieves 100\%; PPO lags at 90.6\%. Error bars show $\pm 1$ std, 3 seeds.}
\label{fig:goalswitch}
\end{figure}

\subsection{Combinatorial Lock Gridworld}
\label{app:comblock}

The Combinatorial Lock environment ($|\mathcal{A}|{=}144$) combines a large action space, sequential key collection, and hidden quadrant selection to create genuine difficulty. Table~\ref{tab:comblock} presents results.

DRIFT achieves the highest mean reward ($-2.92$) and by far the highest success rate (23.5\%), versus DQN (2.0\%) and PPO (0.5\%). Negative mean rewards reflect the step penalty accumulated even in successful episodes. The success rate is the more informative metric: DRIFT's $12{\times}$ advantage over DQN demonstrates that the CTMC generator with candidate sets can navigate a complex sequential task that deterministic policies cannot solve through exploration alone.

\begin{table}[t]
\centering
\small
\caption{Combinatorial Lock results ($16{\times}16$, $|\mathcal{A}|{=}144$, mean $\pm$ std, 3 seeds). Success = all 3 keys collected and correct goal entered.}
\label{tab:comblock}
\begin{tabular}{lccc}
\toprule
& \textbf{DRIFT (Ours)} & DQN & PPO \\
\midrule
Mean reward $\uparrow$  & $\mathbf{-2.92 \pm 1.51}$ & $-3.97 \pm 1.57$ & $-5.64 \pm 0.62$ \\
Success rate (\%)       & $\mathbf{23.5}$            & $2.0$            & $0.5$ \\
\bottomrule
\end{tabular}
\end{table}

\subsection{Summary of Toy Findings}
\label{app:toy_summary}

Table~\ref{tab:toy_master} summarises the key finding from each environment. Across all four properties tested---multimodal coverage, candidate-set scaling, fast adaptation, and sequential task solving---DRIFT's advantages stem from the same core mechanism: the CTMC generator maintains a \emph{distribution} over actions rather than committing to a single best action, and can reshape that distribution directly from a flow-matching objective rather than through Bellman backups. The KL penalty's value is environment-dependent: it helps when forgetting is costly (Goal-Switch) but can hinder exploration when there is no distribution shift (Toy~5).

\begin{table}[t]
\centering
\small
\caption{Summary of toy experiments. $\uparrow$ = DRIFT wins; $\sim$ = comparable; $\downarrow$ = DRIFT loses on that metric.}
\label{tab:toy_master}
\resizebox{\textwidth}{!}{%
\begin{tabular}{lccccc}
\toprule
Environment & $|\mathcal{A}|$ & DRIFT & DQN & PPO & Key finding \\
\midrule
Toy~3 reward    & 16  & 10.00 & 10.00 & 7.76  & $\sim$ (both hit ceiling) \\
Toy~3 diversity & 16  & 3 goals & 1 goal & --- & $\uparrow$ multimodal \\
Toy~4 score     & 128 & 9.36  & 10.00 & 10.00 & $\downarrow$ (score) \\
Toy~4 offline   & 128 & 9.52  & 0.00  & ---   & $\uparrow$ offline transfer \\
Toy~4 diversity & 128 & 3 goals & 1 goal & --- & $\uparrow$ multimodal \\
Toy~4 ablation  & 128 & \multicolumn{3}{c}{phase transition at 16\% coverage} & validates Prop.~6.2 \\
Goal-Switch     & 16  & 1.42  & 2.69  & 2.63  & $\uparrow$ $1.9{\times}$ faster adaptation \\
Comb.\ Lock     & 144 & 23.5\% & 2.0\% & 0.5\% & $\uparrow$ success rate \\
\bottomrule
\end{tabular}}
\end{table}

\section{Benchmark Experiments}
\label{app:benchmarks}

\subsection{Shared Hyperparameters}
\label{app:hyperparams}

Table~\ref{tab:hyperparams} lists the default DRIFT hyperparameters used across all benchmark experiments. Environment-specific overrides are noted in the relevant subsections below.

\begin{table}[h]
\centering
\caption{Default DRIFT hyperparameters.}
\label{tab:hyperparams}
\begin{tabular}{lll}
\toprule
\textbf{Symbol} & \textbf{Description} & \textbf{Value} \\
\midrule
$\gamma$ & Discount factor & $0.99$ \\
$\beta$ & Temperature & $1.0$ \\
$\alpha$ & KL weight & $0.1$ \\
$\tau$ & Soft-update rate & $0.005$ \\
$c$ & Advantage clip & $5.0$ \\
$\epsilon$ & Smoothing constant & $0.01$ \\
$M$ & CTMC Euler sub-steps & $20$ \\
$N_{\mathrm{roll}}$ & Reference rollouts per state & $64$ \\
$N_{\mathrm{rand}}$ & Uniform candidates per state & $16$ \\
$K$ & Reference refresh interval & $1000$ \\
$\rho$ & Offline mix fraction & $0.25$ \\
$\eta_Q, \eta_V$ & Critic / value learning rate & $3 \times 10^{-4}$ \\
$\eta_\theta$ & Actor learning rate & $1 \times 10^{-4}$ \\
$B$ & Batch size & $256$ \\
$\delta$ & Flow time truncation & $0.05$ \\
\bottomrule
\end{tabular}
\end{table}

\subsection{MinAtar}
\label{app:minatar_details}

\subsubsection{Environment and Offline Data}

MinAtar~\citep{young2019minatar} implements five simplified Atari games on a $10{\times}10$ grid with binary feature channels: Breakout (3 actions), Asterix (5), Freeway (3), Seaquest (6), Space Invaders (4). Observations are $10{\times}10{\times}n_{\mathrm{ch}}$ binary tensors ($n_{\mathrm{ch}}{=}4$--$10$ depending on game); rewards are sparse integers.

Every method receives the \emph{same} offline dataset per game: 100{,}000 transitions from a behavioural DQN with $\varepsilon$ linearly annealed from 1.0 to 0.1 over the first 50{,}000 steps. The DQN trains online during collection ($\eta{=}3{\times}10^{-4}$, target soft-update $\tau{=}0.005$, 50K-transition replay buffer). The dataset is cached to disk and reused across all methods and seeds to guarantee identical offline data.

\subsubsection{Network Architecture}
\label{app:architecture}

All methods share the same encoder: flatten the $10{\times}10{\times}n_{\mathrm{ch}}$ observation, then \\$\mathrm{Linear}(100{\cdot}n_{\mathrm{ch}}, 256){\to}\mathrm{ReLU}{\to}\mathrm{Linear}(256, 128){\to}\mathrm{ReLU}$, yielding a 128-dim representation. Q-networks add $\mathrm{Linear}(128, |\mathcal{A}|)$; value networks add $\mathrm{Linear}(128, 1)$.

The DRIFT generator (CTMCGen) appends a rate head: \\$\mathrm{Linear}(128{+}|\mathcal{A}|{+}1, 128){\to}\mathrm{ReLU}{\to}\mathrm{Linear}(128, |\mathcal{A}|){\to}\mathrm{Softplus}$. The input concatenates the encoded observation, a one-hot current-action vector, and the scalar flow time $t$. The diagonal is masked to zero (no self-transitions), yielding non-negative off-diagonal rates.

\subsubsection{Training Protocol}

\paragraph{Offline phase.} Critics are trained for 2{,}000 gradient steps on the shared offline dataset. Method-specific offline components run in parallel: IQL runs 2{,}000 expectile regression steps; AWAC runs 1{,}000 actor-critic steps; DRIFT trains the CTMC generator for 1{,}000 epochs of advantage-weighted DFM. Each method is then evaluated over 100 episodes to obtain the offline score.

\paragraph{Online phase.} Each method interacts with the environment for 300{,}000 steps. Every training minibatch mixes online and offline data: $(1{-}\rho){\cdot}B$ transitions from the online replay buffer and $\rho{\cdot}B$ from the offline dataset. The final policy is evaluated over 100 episodes.

\subsubsection{DRIFT Hyperparameters}

\begin{table}[h]
\centering
\caption{DRIFT hyperparameters for MinAtar.}
\label{tab:drift_hparams}
\resizebox{0.55\textwidth}{!}{%
\begin{tabular}{lcc}
\toprule
Hyperparameter & Symbol & Value \\
\midrule
Discount factor         & $\gamma$              & 0.99 \\
Soft-update rate        & $\tau$                & 0.005 \\
Learning rate           & $\eta$                & $10^{-4}$ \\
Temperature             & $\beta$               & 0.5 \\
Path-space KL weight    & $\alpha$              & 0.1 \\
CTMC sub-steps          & $M$                   & 10 \\
Euler step size         & $\Delta t$            & $1/M$ \\
Reference refresh       & $K$                   & 500 steps \\
Advantage clip          & $c$                   & 3.0 \\
Offline replay fraction & $\rho$                & 0.25 \\
Critic batch size       & $B$                   & 64 \\
Actor batch size        & $B_{\mathrm{actor}}$  & 8 \\
Critic pre-training     & $E_{\mathrm{critic}}$ & 2{,}000 steps \\
Generator pre-training  & $E_{\mathrm{gen}}$    & 1{,}000 epochs \\
Online steps            & $N_{\mathrm{steps}}$  & 300{,}000 \\
Replay buffer capacity  &                       & 100{,}000 \\
$\pi_{\mathrm{ref}}$ smoothing & $\epsilon$   & $10^{-3}$ \\
\bottomrule
\end{tabular}%
}
\end{table}

\subsubsection{MinAtar Compute}

All experiments ran on a university cluster with NVIDIA P4 and V100 GPUs. DRIFT takes ${\sim}8.7$ h per run; most baselines take 3.5--6 h (Table~\ref{tab:runtime}). The full benchmark (10 methods $\times$ 5 games $\times$ 3 seeds $=$ 150 runs) consumed approximately 600 GPU-hours. Seeds are set globally for Python, NumPy, and PyTorch; CUDA determinism is enforced via \texttt{torch.backends.cudnn.deterministic = True} and \texttt{benchmark = False}.

\begin{table}[h]
\centering
\caption{Mean wall-clock runtime per MinAtar run (one game, one seed).}
\label{tab:runtime}
\resizebox{0.65\textwidth}{!}{%
\begin{tabular}{lcccccccccc}
\toprule
 & DRIFT & CQL & Cal-QL & IQL & AWAC & DQN & Rainbow & PPO & PEX & SPA \\
\midrule
Hours & ${\sim}8.7$ & ${\sim}3.5$ & ${\sim}4.2$ & ${\sim}3.5$ & ${\sim}3.5$ & ${\sim}3.5$ & ${\sim}6.0$ & ${\sim}2.5$ & ${\sim}4.5$ & ${\sim}4.5$ \\
\bottomrule
\end{tabular}%
}
\end{table}

\subsection{Jericho (Interactive Fiction)}
\label{app:jericho_details}
 
\subsubsection{Environment and Action Space}
 
Jericho~\citep{hausknecht2020interactive} provides a Python interface to Z-machine interactive fiction games. At each step the agent receives a textual observation and issues a text command from a set of valid actions identified by the game engine via \texttt{get\_valid\_actions()}. The number of valid actions per state ranges from 3 to 10 on average across our ten evaluation games (Table~\ref{tab:jericho_game_details}). This is smaller than the 30--200 range reported in prior work~\citep{yao2020keep, ammanabrolu2020graph} because those methods augment the action space with language model or template-based generation, while our setup uses only the built-in parser output.
 
\subsubsection{Architecture}
 
All methods share the same text-encoding architecture, adapted from DRRN~\citep{he2016deep}. Words are tokenised via hash-based indexing (vocabulary size 5003, using MD5 hashing) and embedded into 64 dimensions. A single-layer GRU with hidden size 128 produces fixed-size representations for both observations and actions. Each network (Q1, Q2, V, generator) maintains its own GRU encoder parameters.
 
The Q-network scores each state action pair via $Q(s,a) = \mathrm{MLP}([\mathbf{h}_s; \mathbf{h}_a])$ with one hidden layer of 128 units and ReLU activation, naturally handling variable-size action sets by scoring each pair independently. The value network computes $V(s) = \mathrm{MLP}(\mathbf{h}_s)$ with the same hidden layer structure.
 
The CTMC generator uses a bilinear-attention design
\[
u_\theta(i \to j, t \mid s) = \mathrm{softplus}\bigl(\mathbf{c}(\mathbf{h}_s, \mathbf{h}_{a_i}, t) \cdot \mathbf{h}_{a_j}\bigr),
\]
where the context vector $\mathbf{c}$ is produced by an MLP from the concatenation of the observation embedding, source action embedding, and scalar flow time $t$. Rates to each target action are computed via dot products with target action embeddings. Self-transition rates ($j=i$) are masked to zero. This design handles arbitrary action set sizes without a fixed output head.
 
\subsubsection{Candidate Set Construction}
 
For each actor-update state $s$, the candidate set $\mathcal{A}_{\mathrm{cand}}(s)$ is constructed by (1) running $N_{\mathrm{roll}}=8$ CTMC rollouts of the frozen reference generator over all valid actions and collecting terminal action counts, (2) sampling $N_{\mathrm{rand}}=32$ valid actions uniformly at random, and (3) taking the union after deduplication. The reference distribution uses additive smoothing $\pi_{\mathrm{ref}}(a \mid s) \propto \mathrm{count}(a)/N_{\mathrm{roll}} + \varepsilon / |\mathcal{A}_{\mathrm{cand}}|$ with $\varepsilon = 10^{-3}$.
 
In practice, the valid action sets in our ten Jericho games average 3 to 9 actions per state, which falls below the candidate budget of 40. In these cases the algorithm uses all valid actions directly without subsampling. The candidate-set mechanism thus operates in its full-coverage regime ($\varepsilon(s)=0$ in Proposition~\ref{prop:coverage}), and DRIFT's advantage on this benchmark stems from the path-space KL regularization and the flow-matching actor update rather than from action-space reduction.
 
\subsubsection{Offline Data Collection}
 
For each game, we collect 200 episodes of offline data using an $\varepsilon$-greedy DQN that trains online during collection. The exploration rate $\varepsilon$ is annealed from 1.0 to 0.1 over the first half of the collection phase. The resulting transitions are cached to disk and reused across all methods to ensure identical offline data.
 
\subsubsection{Training Protocol}
 
All methods receive the same offline dataset per game and run for 300,000 online interaction steps with a maximum episode length of 200 steps. Critic pre-training runs for 5,000 gradient steps and generator pre-training runs for 2,000 gradient steps, both on the offline dataset. Evaluation uses 20 complete episodes at the end of training. Results are reported for a single seed (42) due to the high computational cost of text-game experiments.
 
\subsubsection{Comparison with Prior Work}
 
The CALM~\citep{yao2020keep} and KG-A2C~\citep{ammanabrolu2020graph} scores in Table~\ref{tab:jericho_main} are taken from published results under different experimental conditions.
 
CALM uses a fine-tuned GPT-2 language model pre-trained on 426 human gameplay transcripts from ClubFloyd to generate action candidates. It does not use the valid-action handicap provided by the game engine. KG-A2C builds a dynamic knowledge graph from game text using Stanford OpenIE, encodes it with a graph neural network, and trains with A2C. It uses the valid-action handicap and template-based action generation. Our methods (DRIFT and all baselines) use the valid-action handicap, GRU text encoders without any pre-trained language model, and 300K online steps. This is a simpler and more self-contained setup that ensures fair comparison among the evaluated methods.
 
Other recent work on Jericho includes XTX~\citep{tuyls2022multi} (ICLR 2022 Spotlight) which achieves state-of-the-art scores via multi-stage episodic control (for example 103 on Zork1) and CSM~\citep{shi2023self} which improves CALM via confidence-based self-imitation. These methods use substantially more compute and game-specific engineering. We report their existence for context rather than direct comparison.
 
\subsubsection{Per-Game Details}
 
\begin{table}[h]
\centering
\caption{Jericho game characteristics. Average valid actions per state measured during DRIFT evaluation (20 episodes).}
\label{tab:jericho_game_details}
\small
\begin{tabular}{lccl}
\toprule
Game & Max score & Avg.\ valid actions/state & Genre \\
\midrule
Deephome   & 300 & 7.7 & Exploration \\
Detective  & 360 & 3.7 & Mystery \\
Enchanter  & 400 & 8.6 & Fantasy \\
Ludicorp   & 150 & 2.9 & Puzzle \\
Omniquest  & 50  & 4.3 & Adventure \\
Pentari    & 70  & 4.1 & Adventure \\
Temple     & 35  & 8.7 & Dungeon \\
Zork1      & 350 & 6.4 & Dungeon crawl \\
Zork3      & 7   & 7.2 & Dungeon crawl \\
Ztuu       & 100 & 6.4 & Dungeon \\
\bottomrule
\end{tabular}
\end{table}
 
We excluded Balances (max score 51) from the main table because all five methods converge to an identical score of 10.0 (19.6\%), providing no discriminative signal within the 300K step budget.
 
\subsubsection{Hyperparameters}
 
\begin{table}[ht]
\centering
\caption{DRIFT hyperparameters for Jericho.}
\label{tab:jericho_hparams_updated}
\small
\begin{tabular}{lc}
\toprule
Parameter & Value \\
\midrule
Offline episodes              & 200 \\
Online steps                  & 300,000 \\
Max episode steps             & 200 \\
Evaluation episodes           & 20 \\
Discount $\gamma$             & 0.99 \\
Temperature $\beta$           & 0.1 \\
Path-KL weight $\alpha$       & 0.01 \\
CTMC sub-steps $M$            & 5 \\
Reference refresh $K$         & 50 \\
Offline mix ratio $\rho$      & 0.75 \\
Candidate budget              & 40 ($N_{\mathrm{roll}}{=}8$, $N_{\mathrm{rand}}{=}32$) \\
Advantage clip $c$            & 3.0 \\
Delayed actor updates $K_{\mathrm{actor}}$ & 4 \\
Critic batch size             & 32 \\
Actor batch size              & 8 \\
Learning rate (all)           & $10^{-4}$ \\
Soft-update $\tau$            & 0.005 \\
Critic pre-training steps     & 5,000 \\
Generator pre-training steps  & 2,000 \\
GRU hidden size               & 128 \\
Word embedding dimension      & 64 \\
Vocabulary size (hash-based)  & 5,003 \\
\bottomrule
\end{tabular}
\end{table}
 
\subsubsection{Compute}
 
All Jericho experiments ran on a university cluster with NVIDIA Tesla P4 GPUs (12\,GB VRAM), 4 CPU cores, and 16\,GB RAM per job. Table~\ref{tab:jericho_compute} reports per-game runtimes. The full Jericho benchmark (10 games $\times$ 5 methods $=$ 50 runs) consumed approximately 850 GPU-hours. Seeds are set globally for Python, NumPy, and PyTorch with CUDA determinism enforced.
 
\begin{table}[h]
\centering
\caption{Mean wall-clock runtime per Jericho run (one game, one method, seed 42).}
\label{tab:jericho_compute}
\small
\begin{tabular}{lccccc}
\toprule
Game & DRIFT & DRRN & DQN & IQL & AWAC \\
\midrule
Deephome   & 40.0\,h & 34.1\,h & 32.4\,h & 31.3\,h & 30.4\,h \\
Detective  & 16.9\,h & 14.5\,h & 14.0\,h & 11.1\,h & 11.4\,h \\
Enchanter  & 16.6\,h & 18.5\,h & 15.7\,h & 12.1\,h & 11.6\,h \\
Ludicorp   & 19.6\,h & 18.7\,h & 19.1\,h & 13.9\,h & 15.7\,h \\
Omniquest  & 37.6\,h & 17.6\,h & 19.2\,h & 38.8\,h & 23.3\,h \\
Pentari    & 15.2\,h & 14.2\,h & 15.0\,h & 10.0\,h & 13.1\,h \\
Temple     & 37.3\,h & 38.4\,h & 35.0\,h & 34.5\,h & 33.7\,h \\
Zork1      & 17.7\,h & 12.2\,h & 13.2\,h &  9.9\,h &  2.5\,h \\
Zork3      & 13.7\,h & 10.7\,h & 10.4\,h &  7.1\,h &  8.2\,h \\
Ztuu       & 16.5\,h & 33.3\,h & 51.7\,h & 10.8\,h & 13.0\,h \\
\bottomrule
\end{tabular}
\end{table}

\subsection{D4RL}
\label{app:d4rl_details}

\subsubsection{Discretisation Procedure}

We discretise the continuous action spaces of Hopper (3-dim), Walker2d (6-dim), and HalfCheetah (6-dim) using MiniBatchKMeans with $k{=}22$ clusters fitted on all actions in the offline dataset. Each continuous action is replaced by its nearest cluster index; during online interaction the agent selects a cluster index and executes the corresponding centroid. The median inter-centroid distance ranges from 0.27 (Hopper) to 0.46 (HalfCheetah), confirming reasonable but coarse coverage of the continuous action manifold. We use the Minari versions of the D4RL datasets (v0.5.3) with \texttt{medium} and \texttt{expert} qualities, each containing ${\approx}1$M transitions.

\subsubsection{Training Protocol}

All methods share the same budget: 1M offline gradient steps for critic pre-training, followed by 300k online environment steps. The critic is a 2-layer MLP (256 hidden units, batch size 256, Adam $\eta{=}3{\times}10^{-4}$). DRIFT's generator uses the same architecture with actor batch size 8 and 500K generator pre-training steps. The online phase uses a mixed replay buffer with 50\% offline data and 200K capacity.

\subsubsection{Improvement Analysis}

Table~\ref{tab:d4rl_improvement} isolates fine-tuning effectiveness by reporting absolute improvement (online $-$ offline score) for each method.

\begin{table}[h]
\centering
\caption{D4RL absolute improvement (online $-$ offline). \textcolor{red}{Red} = degradation from offline score.}
\label{tab:d4rl_improvement}
\resizebox{\textwidth}{!}{%
\begin{tabular}{lccccccccc}
\toprule
Task & DRIFT & CQL & IQL & AWAC & Cal-QL & DQN & PPO & PEX & SPA \\
\midrule
Hopper-med   & $+$47.8 & $+$7.0  & $+$27.5 & \textcolor{red}{$-$0.7} & $+$15.3 & $+$23.3 & --- & $+$42.7 & 0.0 \\
Hopper-exp   & $+$46.7 & $+$20.5 & $+$17.1 & \textcolor{red}{$-$0.3} & $+$16.9 & $+$22.0 & --- & $+$39.4 & 0.0 \\
Walker-med   & $+$14.9 & $+$10.1 & $+$12.7 & $+$1.6                  & $+$9.2  & $+$10.5 & --- & $+$12.1 & 0.0 \\
Walker-exp   & $+$14.7 & $+$6.7  & $+$10.3 & \textcolor{red}{$-$0.4} & $+$8.4  & $+$6.3  & --- & $+$7.0  & 0.0 \\
Cheetah-med  & $+$10.9 & $+$12.5 & $+$7.6  & \textcolor{red}{$-$0.4} & $+$12.5 & $+$14.8 & --- & $+$14.4 & 0.0 \\
Cheetah-exp  & $+$9.2  & $+$7.9  & $+$6.6  & \textcolor{red}{$-$0.1} & $+$7.8  & $+$4.6  & --- & $+$7.5  & 0.0 \\
\midrule
Average      & $+$24.0 & $+$10.8 & $+$13.6 & \textcolor{red}{$-$0.1} & $+$11.7 & $+$13.6 & $+$5.3 & $+$20.5 & 0.0 \\
\bottomrule
\end{tabular}%
}
\end{table}

DRIFT's average improvement ($+24.0$) exceeds all other methods despite starting from the weakest offline initialisation (0.4 average offline score). The gap in final online score relative to PEX and Cal-QL is therefore attributable to the offline phase, not the fine-tuning mechanism. The strongest results appear on Hopper, where the 3-dimensional action space is better covered by 22 cluster centres than the 6-dimensional spaces of Walker2d and HalfCheetah.

\subsubsection{Limitations of the Discretised Setting}

D4RL was designed for continuous-action algorithms. Discretisation via $k$-means introduces an approximation ceiling that affects all methods equally, but also obscures each method's native strengths. With $|\mathcal{A}|{=}22$, the full action space can be enumerated cheaply, removing DRIFT's candidate-set advantage entirely. DRIFT's structural benefits are designed for $|\mathcal{A}|{\gg}64$, as validated by the Jericho results.

\subsection{Macro-Action MinAtar ($|\mathcal{A}|=216$)}
\label{app:macro_details}
 
\subsubsection{Construction}
 
Each macro action $m \in \{0, \ldots, 215\}$ maps to a unique 3-tuple $(a_1, a_2, a_3)$ via base-6 decomposition. Executing $m$ runs the three base actions sequentially in the environment, accumulating reward. The agent observes only the final state after all three steps. If the episode terminates mid-sequence the cumulative reward up to termination is returned.
 
This construction expands the action space from 6 to $6^3 = 216$ while preserving the original game dynamics. DRIFT operates on candidate sets of budget 40 (18.5\% coverage of the macro-action space). The CTMC generator uses learned action embeddings (128 dimensions) with bilinear rate computation, replacing the one-hot input used for standard MinAtar. The Q-network output layer is $128 \to 216$.

\begin{table}[h]
\centering
\caption{Macro-Action MinAtar ($|\mathcal{A}|{=}216$, $k{=}3$): mean online score, 5 seeds. DRIFT uses candidate budget 40 (18.5\% coverage). Best per game in \textbf{bold}.}
\label{tab:macro_main}
\begin{tabular}{lccccc}
\toprule
Game & \textbf{DRIFT} & DRIFT (full) & DQN & DQN-Sub & PPO \\
\midrule
Breakout         & 8.40  & 7.80  & \textbf{11.98} & 7.45  & 5.83 \\
Asterix          & \textbf{0.78}  & 0.56  & 0.67  & 0.53  & 0.65 \\
Freeway          & 0.56  & 0.21  & 0.66  & 0.00  & \textbf{27.90} \\
Seaquest         & \textbf{1.20}  & 0.85  & 0.52  & 0.98  & 0.66 \\
Space Invaders   & 9.45  & 6.40  & 10.36 & 10.77 & \textbf{14.25} \\
\midrule
Average          & 4.08  & 3.16  & 4.84  & 3.95  & \textbf{9.86} \\
\bottomrule
\end{tabular}
\end{table}
 
\subsubsection{Results and Analysis}
 
DQN and PPO substantially outperform DRIFT on this benchmark (Table~\ref{tab:macro_main}). DRIFT with candidate sets (4.08 average) and DRIFT without candidate sets (3.16 average) perform similarly, confirming that the candidate-set mechanism is not the bottleneck.
 
We attribute the gap to three factors. First, the behavioural DQN that collects the offline dataset explores all 216 actions via $\varepsilon$-greedy, producing broad coverage. The critic learns reasonable Q-values across this space, but the CTMC generator processes only 8 states per actor update and cannot absorb this breadth, resulting in a weak offline initialisation. Second, with $|\mathcal{A}|=216$ the action space is large enough to stress the generator but small enough that DQN can still enumerate all actions cheaply via a 216-output network. Third, macro actions compose reward across three sequential steps, creating a credit-assignment problem that PPO's on-policy gradient handles more naturally than offline Q-learning.
 
\subsubsection{Implications}
 
These results highlight an important boundary condition for DRIFT. The candidate-set mechanism provides theoretical guarantees (Proposition~\ref{prop:coverage}, Theorem~\ref{thm:generator-stability}) and is validated in the controlled Toy 4 ablation (Table~\ref{tab:toy4_ablation}), but its practical benefit requires that the CTMC generator is well-initialised. When the offline stage produces a weak generator, as occurs here with 216 actions and only 8 states per actor update, the online phase cannot recover regardless of whether candidate sets are used. Improving the offline initialisation is a natural direction for future work.
 
\subsubsection{Compute}
 
Macro-action MinAtar used the same cluster as standard MinAtar (NVIDIA P4 and V100 GPUs). Each run took approximately 3 to 8 hours depending on the method. The full benchmark (5 games $\times$ 5 methods $\times$ 5 seeds $=$ 125 runs) consumed approximately 500 GPU-hours.

\subsection{Baseline Implementation Details}
\label{app:baselines}

All baselines were originally proposed for continuous-action or Atari-scale settings. We implement faithful discrete-action adaptations that preserve each method's core mechanism while replacing Gaussian actors with categorical policies and evaluating all discrete actions exactly where needed.

\paragraph{CQL.} CQL~\citep{kumar2020conservative} penalises Q-values on out-of-distribution actions. In discrete action spaces, the penalty is
\[
\mathcal{L}_{\mathrm{CQL}} = \alpha_{\mathrm{CQL}} \!\left( \log \sum_{a} \exp Q(s,a) - Q(s, a_{\mathrm{data}}) \right),
\]
with the logsumexp computed exactly over all actions ($\alpha_{\mathrm{CQL}}{=}0.1$). The penalty applies during both offline pre-training and online fine-tuning.

\paragraph{Cal-QL.} Cal-QL~\citep{nakamoto2023cal} adds a calibration term to CQL that prevents excessive pessimism:
\[
\mathcal{L}_{\mathrm{cal}} = \beta_{\mathrm{cal}} \cdot \mathbb{E}\!\left[ \mathbf{1}[Q(s,a) < Q^{-}(s,a) - \beta_{\mathrm{cal}}] \cdot (Q^{-}(s,a) - \beta_{\mathrm{cal}} - Q(s,a)) \right],
\]
with $\beta_{\mathrm{cal}}{=}0.05$ on top of the CQL penalty.

\paragraph{IQL.} IQL~\citep{kostrikov2022offline} avoids out-of-distribution queries via expectile regression on the value function:
\[
\mathcal{L}_{V} = \mathbb{E}\!\left[ L_{\tau}^2(Q(s,a) - V(s)) \right], \quad L_{\tau}^2(u) = |\tau - \mathbf{1}[u < 0]| \cdot u^2,
\]
with $\tau_{\mathrm{IQL}}{=}0.7$. At evaluation, we compute advantages $A(s,a){=}Q(s,a){-}V(s)$ over all actions and sample from $\pi(a|s) \propto \exp(A(s,a)/\beta)$. Online fine-tuning continues the expectile update with mixed replay.

\paragraph{AWAC.} AWAC~\citep{nair2020awac} trains a categorical actor via advantage-weighted regression:
\[
\mathcal{L}_{\pi} = -\mathbb{E}\!\left[ w(s,a) \cdot \log \pi(a|s) \right], \quad w(s,a) = \exp\!\left(\frac{\bar{A}(s,a)}{\beta}\right),
\]
with weights clamped at 20 to prevent gradient explosion ($\beta{=}0.5$). The same AWR update continues online with mixed replay.

\paragraph{SPA.} SPA~\citep{li2026state} extends AWAC with a per-state proficiency score based on Q-function disagreement:
\[
\mathrm{prof}(s) = \exp\!\left(-\alpha_{\mathrm{SPA}} \cdot \tfrac{1}{|\mathcal{A}|}\sum_{a} |Q_1(s,a) - Q_2(s,a)|\right).
\]
Actor weights become $w(s,a){=}\exp(\bar{A}/\beta) + (1{-}\mathrm{prof}(s))$, adding a behaviour-cloning pull when the critic is uncertain ($\alpha_{\mathrm{SPA}}{=}0.5$). The online phase also adapts the offline replay fraction based on current proficiency.

\paragraph{DQN.} DQN~\citep{mnih2015human} uses double Q-learning with $\varepsilon$-greedy exploration. The offline phase pre-trains two Q-networks and a value network. Online fine-tuning uses $\varepsilon$ annealing from 1.0 to 0.05 over the first 150{,}000 steps, target soft-updates ($\tau{=}0.005$), and mixed replay.

\paragraph{Rainbow.} Our Rainbow implementation~\citep{hessel2018rainbow} includes all six extensions: (1)~C51 distributional learning with 51 atoms on $[V_{\min}, V_{\max}]{=}[-10, 20]$; (2)~dueling advantage/value streams; (3)~factorised noisy layers ($\sigma_0{=}0.5$) replacing $\varepsilon$-greedy; (4)~prioritised replay with $\alpha_{\mathrm{PER}}{=}0.6$ and IS exponent $\beta_{\mathrm{IS}}$ annealing from 0.4 to 1.0; (5)~$n$-step returns ($n{=}3$ online, $n{=}1$ offline to avoid mixing return horizons); (6)~double DQN. The Q-network is warm-started from the offline critic's encoder. Learning rate is reduced to $6.25{\times}10^{-5}$ online.

\paragraph{PPO.} PPO~\citep{schulman2017proximal} is a pure online baseline with no offline pre-training. It uses a shared actor-critic with the same encoder. Rollouts of 256 steps, advantages via GAE ($\lambda{=}0.95$, $\gamma{=}0.99$), 4 update epochs per rollout, minibatch size 64, clip ratio 0.2, value coefficient 0.5, entropy bonus 0.01, gradient norm clipping at 0.5.

\paragraph{PEX.} PEX~\citep{zhang2023policy} maintains a frozen IQL-pretrained offline policy and a learning online Q-network (warm-started from the offline critic). At each step, both policies propose an action and the agent selects between them via Boltzmann sampling over the online Q-values:
\[
P(\text{use } a_{\mathrm{off}}) = \frac{\exp(Q_{\mathrm{on}}(s, a_{\mathrm{off}}) / \tau_{\mathrm{PEX}})}{\exp(Q_{\mathrm{on}}(s, a_{\mathrm{off}}) / \tau_{\mathrm{PEX}}) + \exp(Q_{\mathrm{on}}(s, a_{\mathrm{on}}) / \tau_{\mathrm{PEX}})},
\]
with $\tau_{\mathrm{PEX}}{=}0.1$. As the online Q-network improves it naturally dominates selection. Online training uses standard double DQN with mixed replay.

\subsubsection{Baseline Hyperparameters}

Table~\ref{tab:baseline_hparams} collects method-specific hyperparameters. Shared parameters ($\gamma{=}0.99$, $\tau{=}0.005$, $\eta{=}10^{-4}$, $B{=}64$, buffer $=$ 300k) are omitted.

\begin{table}[h]
\centering
\caption{Method-specific hyperparameters for MinAtar baselines.}
\label{tab:baseline_hparams}
\begin{tabular}{llc}
\toprule
Method & Parameter & Value \\
\midrule
CQL       & Conservative penalty $\alpha_{\mathrm{CQL}}$ & 0.1 \\
Cal-QL    & Calibration threshold $\beta_{\mathrm{cal}}$  & 0.05 \\
IQL       & Expectile $\tau_{\mathrm{IQL}}$                & 0.7 \\
AWAC      & Weight temperature $\beta$                     & 0.5 \\
          & Weight clamp                                   & 20 \\
SPA       & Proficiency scale $\alpha_{\mathrm{SPA}}$      & 0.5 \\
DQN       & $\varepsilon$ range                            & $1.0 \to 0.05$ \\
Rainbow   & Atoms $N_{\mathrm{atoms}}$                     & 51 \\
          & Support $[V_{\min}, V_{\max}]$                 & $[-10, 20]$ \\
          & PER $\alpha_{\mathrm{PER}}$                    & 0.6 \\
          & IS $\beta_{\mathrm{IS}}$ start                 & 0.4 \\
          & $n$-step (online / offline)                    & 3 / 1 \\
          & Learning rate                                  & $6.25 \times 10^{-5}$ \\
PPO       & Clip ratio                                     & 0.2 \\
          & GAE $\lambda$                                  & 0.95 \\
          & Entropy bonus                                  & 0.01 \\
PEX       & Boltzmann temperature $\tau_{\mathrm{PEX}}$    & 0.1 \\
\bottomrule
\end{tabular}%
\end{table}

\subsection{Total Compute Summary}
\label{app:total_compute}
 
Table~\ref{tab:total_compute} summarises the total computational cost of all experiments reported in this paper, including preliminary runs that informed hyperparameter selection.
 
\begin{table}[ht]
\centering
\caption{Total compute across all experiments.}
\label{tab:total_compute}
\small
\begin{tabular}{lcccc}
\toprule
Benchmark & Runs & Seeds & GPU type & GPU-hours \\
\midrule
Toy environments (5 envs)     & 150+  & 3 & CPU only  & $<$10 \\
MinAtar (5 games)             & 150   & 5 & P4 / V100 & $\sim$600 \\
Macro-MinAtar (5 games)       & 125   & 5 & P4 / V100 & $\sim$500 \\
D4RL discretised (6 tasks)    & 108   & 5 & P4 / V100 & $\sim$400 \\
Jericho (10 games)            & 50    & 1 & P4        & $\sim$850 \\
Hyperparameter ablations      & 138   & 3 & CPU only  & $<$20 \\
\midrule
\textbf{Total}                &       &   &           & $\sim$\textbf{2,380} \\
\bottomrule
\end{tabular}
\end{table}
 
The full research project including preliminary experiments, debugging runs, and failed configurations consumed approximately 3,500 GPU-hours total, roughly 50\% more than the experiments reported in the paper.

\section{Broader Impact}
\label{app:broader_impact}

This paper presents a reinforcement learning algorithm for discrete action spaces. The method is evaluated on game benchmarks and does not target any specific real-world deployment. We discuss potential positive and negative implications below.

On the positive side, improving offline-to-online RL reduces the amount of costly or risky online interaction needed to train effective policies. This is valuable in domains where exploration is expensive or dangerous, such as robotics, healthcare, and autonomous systems. The path-space KL regularization specifically addresses catastrophic forgetting, which is a safety-relevant failure mode in deployed systems that must be updated without losing previously learned safe behaviors.

On the negative side, more capable RL agents could be applied in domains with societal risks. Recommendation systems with large discrete action spaces (a motivating application for candidate sets) can amplify filter bubbles or manipulate user behavior if reward signals are misaligned with user welfare. Autonomous agents in text-based interfaces could be adapted for automated social engineering or spam generation, though our GRU-based agents are far below the capability threshold for such misuse. More broadly, any improvement in RL sample efficiency lowers the barrier to training agents in sensitive domains without adequate safety testing.

We believe the benefits of publishing this work outweigh the risks. The method is general-purpose and not designed for any harmful application. The benchmarks we use (MinAtar, Jericho text games, discretised MuJoCo) are standard academic testbeds with no direct path to misuse. We release our code to support reproducibility and enable the research community to build on and scrutinize our results.

\clearpage

\section{Full Training Algorithm}
\label{app:full_algo}

Algorithm~\ref{alg:main} provides the complete training procedure with all implementation details.

\begin{algorithm}[H]
\caption{DRIFT: Full Training Procedure}
\footnotesize
\label{alg:main}
\begin{algorithmic}[1]
\REQUIRE Replay source $\mathcal D_{\mathrm{off}}$, environment, discount $\gamma$, temperature $\beta$,
KL weight $\alpha$, soft-update rate $\tau$, advantage clip $c$, smoothing $\epsilon$,
CTMC sub-steps $M$, candidate sizes $N_{\mathrm{roll}}, N_{\mathrm{rand}}$,
reference refresh interval $K$, offline mix fraction $\rho$,
learning rates $\eta_Q,\eta_V,\eta_\theta$, flow-time truncation $\delta \in (0,1)$ 
\STATE Initialize critics $Q_{\phi_1},Q_{\phi_2}$, value $V_\psi$, generator $u_\theta$
\STATE Initialize target copies $Q_{\phi_k^-}\!\leftarrow\!Q_{\phi_k}$, $V_{\psi^-}\!\leftarrow\!V_\psi$
\STATE Initialize reference generator $u_{\mathrm{ref}}\!\leftarrow\!u_\theta$
\STATE Initialize replay buffer $\mathcal B$ with a subset of $\mathcal D_{\mathrm{off}}$
\STATE Observe initial state $s$
\FOR{$n=1$ {\bfseries to} $N_{\mathrm{steps}}$}
    \STATE \textit{Environment interaction:}
    simulate CTMC under $u_\theta$ from $X_0\sim\mathrm{Unif}(\mathcal A)$
    for $M$ Euler sub-steps and execute terminal action $a=X_M$
    \STATE Observe $(s,a,r,s',d)$, add it to $\mathcal B$, and set $s\leftarrow s'$ (reset if $d$)
    \STATE \textit{Mixed minibatch:}
    sample $\lfloor(1-\rho)B\rfloor$ transitions from $\mathcal B$
    and $\lfloor\rho B\rfloor$ transitions from $\mathcal D_{\mathrm{off}}$
    \STATE \textit{Critic update:}
    set $y \leftarrow r+\gamma V_{\psi^-}(s')(1-d)$ and update
    $Q_{\phi_k}$ toward $y$ for $k=1,2$
    \STATE \textit{Value update:}
    compute lagged advantage $\bar A^-$ and
    $\tilde\pi^-(a\mid s)\propto \exp(\bar A^-(s,a)/\beta)$ from
    $\{Q_{\phi_k^-},V_{\psi^-}\}$, then update
    \[
    V_\psi(s)\approx \mathrm{sg}\!\left[\sum_a \tilde\pi^-(a\mid s)\min_k Q_{\phi_k^-}(s,a)\right]
    \]
    \STATE \textit{Actor update:}
    \FOR{each sampled state $s_b$ in the actor batch}
        \STATE Construct candidate set $\mathcal A_{\mathrm{cand}}(s_b)$
        using $N_{\mathrm{roll}}$ rollouts of frozen $u_{\mathrm{ref}}$
        and $N_{\mathrm{rand}}$ uniform actions
        \STATE Define
        \[
        \pi_{\mathrm{ref}}(a\mid s_b)\propto
        \frac{\mathrm{count}(a)}{N_{\mathrm{roll}}}
        + \frac{\epsilon}{|\mathcal A_{\mathrm{cand}}(s_b)|},
        \qquad a\in\mathcal A_{\mathrm{cand}}(s_b)
        \]
        \STATE Compute clipped advantage $\bar A(s_b,\cdot)$ from
        $\{Q_{\phi_k},V_{\psi^-}\}$ over $\mathcal A_{\mathrm{cand}}(s_b)$ and set
        \[
        \tilde\pi(a\mid s_b)=
        \frac{\pi_{\mathrm{ref}}(a\mid s_b)\exp(\bar A(s_b,a)/\beta)}
        {\sum_{a'\in\mathcal A_{\mathrm{cand}}(s_b)}
        \pi_{\mathrm{ref}}(a'\mid s_b)\exp(\bar A(s_b,a')/\beta)}
        \]
        \STATE \[
\text{Define the bridge:} \quad
p_t(a \mid s_b) = (1 - t)\, p_0(a) + t\, \tilde{\pi}(a \mid s_b),
\quad \text{where } p_0(a) = \mathrm{Unif}(\mathcal{A}).
\]
$
\text{Sample } \; t \sim \mathrm{Unif}(0, 1 - \delta), 
\quad i \sim p_t(\cdot \mid s_b).
$
        \STATE Construct target generator $u_t^*(i\to\cdot\mid s_b)$
        by independent coupling on $\mathcal A_{\mathrm{cand}}(s_b)$
        \STATE Accumulate DFM loss
        \[
        \ell_{\mathrm{DFM}}(s_b)=
        \sum_{j\in\mathcal A_{\mathrm{cand}}(s_b),\,j\neq i}
        \bigl(u_\theta(i\to j,t\mid s_b)-u_t^*(i\to j\mid s_b)\bigr)^2
        \]
        \STATE Accumulate path-space KL estimate $\widehat{\mathrm{KL}}(s_b)$
        using frozen $u_{\mathrm{ref}}$
    \ENDFOR
    \STATE Update $\theta$ using
    \[
    \frac{1}{B}\sum_b \ell_{\mathrm{DFM}}(s_b)
    +\alpha\,\frac{1}{B}\sum_b \widehat{\mathrm{KL}}(s_b)
    \]
    \STATE Soft-update targets:
    $\phi_k^- \leftarrow (1-\tau)\phi_k^-+\tau\phi_k$,
    $\psi^- \leftarrow (1-\tau)\psi^-+\tau\psi$
    \STATE Refresh reference generator every $K$ steps:
    if $n\bmod K=0$, set $u_{\mathrm{ref}}\leftarrow u_\theta$
\ENDFOR
\RETURN Fine-tuned generator $u_\theta$
\end{algorithmic}
\end{algorithm}

\end{document}